\theoremstyle{plain}
\newtheorem{theorem}{\textbf{Theorem}}
\newtheorem{proposition}{Proposition}
\newtheorem{observation}{Observation}
\theoremstyle{definition}
\newtheorem{definition}{Definition}
\newcommand{\setcard}[1]{\left|#1\right|}
\newcommand{\Expe}[2]{\mathbb{E}_{#1}\big[#2\big]}
\newcommand{\Cond}{~\Big\vert~}
\newcommand{\cond}{~\big\vert~}
\newcommand{\I}{} 
\newcommand{\ii}[1]{^{#1}} 
\newcommand{\truev}{v^{\pi,H}}
\newcommand{\V}{\widehat{V}}
\newcommand{\Q}{\widehat{Q}}
\newcommand{\M}{\widehat{M}}
\renewcommand{\P}{\widehat{P}}
\newcommand{\R}{\widehat{R}}
\newcommand{\is}{V_{\text{IS}}}
\newcommand{\sis}{V_{\text{step-IS}}}
\newcommand{\dr}{V_{\text{DR}}}
\newcommand{\drv}{V_\text{DR-v2}}
\newcommand{\drvv}{V_\text{DR-v2'}}
\newcommand{\var}[2]{\mathbb{V}_{#1}\big[#2\big]} %
\newcommand{\1}[1]{\mathbf{1}{\{#1\}}}
\newcommand{\Ebig}[2]{\mathbb{E}_{#1}\Big[#2\Big]}
\newcommand{\eqnumbering}{\refstepcounter{equation}\tag{\theequation}}
\newcommand{\train}{_{\text{train}}}
\newcommand{\defeq}{:=}
\newcommand{\eval}{_\text{eval}}
\newcommand{\mdl}{_\text{reg}}
\newcommand{\test}{_\text{test}}\newcommand{\csa}{} 
\newcommand{\eg}{\textit{e.g.}}
\newcommand{\iid}{\textit{i.i.d.}}
\newcommand{\secref}[1]{Section~\ref{#1}}
\newcommand{\figref}[1]{Fig.\ref{#1}}
\newcommand{\eqnref}[1]{Eqn.(\ref{#1})}
\newcommand{\thmref}[1]{Theorem~\ref{#1}}
\newcommand{\obsref}[1]{Obs.\ref{#1}}
\newcommand{\appref}[1]{Appendix~\ref{#1}}
\newcommand{\lihong}[1]{\todo[color=green!20!white!80]{#1}}
\newcommand{\nan}[1]{\todo[color=blue!20!white!80]{#1}}
\renewcommand{\lihong}[1]{}
\renewcommand{\nan}[1]{}
\begin{document}
\twocolumn[
\icmltitle{Doubly Robust Off-policy Value Evaluation for Reinforcement Learning}
\icmlauthor{Nan Jiang}{nanjiang@umich.edu}
\icmladdress{Computer Science \& Engineering, University of Michigan}
\icmlauthor{Lihong Li}{lihongli@microsoft.com}
\icmladdress{Microsoft Research}

\icmlkeywords{reinforcement learning, off-policy evaluation, importance sampling}

\vskip 0.3in
]

\begin{abstract}
We study the problem of \emph{off-policy value evaluation} in reinforcement learning (RL), where one aims to estimate the value of a new policy based on data collected by a different policy. This problem is often a critical step when applying RL to real-world problems. Despite its importance, existing general methods either have uncontrolled bias or suffer high variance. In this work, we extend the \emph{doubly robust} estimator for bandits to sequential decision-making problems, which gets the best of both worlds: it is guaranteed to be unbiased and can have a much lower variance than the popular importance sampling estimators. We demonstrate the estimator's accuracy in several benchmark problems, and illustrate its use as a subroutine in safe policy improvement. We also provide theoretical results on the inherent hardness of the problem, and show that our estimator can match the lower bound in certain scenarios.
\end{abstract}

\section{Introduction}

We study the \emph{off-policy value evaluation} problem, where one aims to estimate the value of a policy with data collected by another policy~\cite{Sutton98Reinforcement}.  This problem is critical in many real-world applications of reinforcement learning (RL), whenever it is infeasible to estimate policy value by running the policy because doing so is expensive, risky, or unethical/illegal. In robotics and business/marketing applications, for instance, it is often risky (thus expensive) to run a policy without an estimate of the policy's quality~\cite{Li11Unbiased,Bottou13Counterfactual,thomas2015high}.  In medical and public-policy domains~\cite{Murphy01Marginal,Hirano03Efficient}, it is often hard to run a controlled experiment to estimate the treatment effect, and off-policy value evaluation is a form of counterfactual reasoning that infers the causal effect of a new intervention from historical data~\cite{Holland86Statistics,Pearl09Causality}. 


There are roughly two classes of approaches to off-policy value evaluation.  The first is to fit an MDP model from data via regression, and evaluate the policy against the model. 
Such a regression based approach has a relatively low variance and works well when the model can be learned to satisfactory accuracy.  However, for complex real-world problems, it is often hard to specify a function class in regression that is efficiently learnable with limited data while at the same time has a small approximation error. Furthermore, it is in general impossible to estimate the approximation error of a function class, resulting in a bias that cannot be easily quantified.  The second class of approaches are based on the idea of importance sampling (IS), which corrects the mismatch between the distributions induced by the target policy and by the behavior policy~\cite{Precup00Eligibility}.  Such approaches have the salient properties of being unbiased and independent of the size of the problem's state space, but its variance can be too large for the method to be useful when the horizon is long~\cite{mandel2014offline}.

In this work, we propose a new off-policy value evaluation estimator that can achieve the best of regression based approaches (low variance) and importance sampling based approaches (no bias).  Our contributions are three-fold:
\vspace*{-.5em}
\begin{compactenum}
\item{A simple doubly robust (DR) estimator is proposed for RL that extends and subsumes a previous off-policy estimator for contextual bandits.}
\item{The estimator's statistical properties are analyzed, which suggests its superiority over previous approaches. Furthermore, in certain scenarios, we prove that the estimator's variance matches the Cramer-Rao lower bound for off-policy value evaluation.}
\item{On benchmark problems, the new estimator is much more accurate than importance sampling baselines, while remaining unbiased in contrast to regression-based approaches.  As an application, we show how such a better estimator can benefit \emph{safe} policy iteration with a more effective policy improvement step.}
\end{compactenum}

\section{Related Work}

This paper focuses on off-policy value evaluation in finite-horizon problems, which are often a natural way to model real-world problems like dialogue systems.  The goal is to estimate the expected return of start states drawn randomly from a distribution. This differs from (and is somewhat easier than) the setting considered in some previous work, often known as off-policy \emph{policy} evaluation, which aims to estimate the whole value function~\cite{Precup00Eligibility,Precup01Off,Sutton15Emphantic}.  Both settings find important yet different uses in practice, and share the same core difficulty of dealing with distribution mismatch. 


The DR technique was first studied in statistics~\cite{Rotnitzky95Semiparametric} to improve  the robustness of estimation against model misspecification, and a DR estimator has been developed for dynamic treatment regime~\cite{Murphy01Marginal}. 
DR was later applied to policy learning in contextual bandits~\cite{langford2011doubly}, and its finite-time variance is shown to be typically lower than IS.  The DR estimator in this work extends the work of \citet{langford2011doubly} to sequential decision-making problems. In addition, we show that in certain scenarios the variance of DR 
matches the statistical lower bound of the estimation problem.

An important application of off-policy value evaluation is to ensure that a new policy to be deployed does not have degenerate performance in policy iteration; example algorithms for this purpose include conservative policy iteration~\cite{kakade2002approximately} and safe policy iteration~\cite{Pirotta13Safe}. 
More recently, \citet{thomas2015high} incorporate  lower confidence bounds with IS in approximate policy iteration to ensure that the computed policy 
meets a minimum value guarantee.  Our work compliments their interesting use of confidence intervals by providing DR as a drop-in replacement of IS. 
We show that after such a replacement, an agent can accept good policies more aggressively hence obtain higher reward, while maintaining the same level of safety against bad policies.

\section{Background}
\subsection{Markov Decision Processes}
An MDP is defined by $M = \langle S, A, P, R, \gamma\rangle$, where $S$ is the state space, $A$ is the action space, $P: S\times A\times S \to \mathbb{R}$ is the transition function with $P(s' | s, a)$ being the probability of seeing state $s'$ after taking action $a$ at state $s$,
$R: S\times A \to \mathbb{R}$ is the mean reward function with $R(s, a)$ being the immediate goodness of $(s,a)$, 
and $\gamma \in [0, 1]$ is the discount factor. Let $\mu$ be the initial state distribution. A (stationary) policy $\pi: S \times A \to [0, 1]$ assigns each state $s\in S$ a distribution over actions, where $a\in A$ has probability $\pi(a | s)$. The distribution of a $H$-step trajectory $\tau = (s_1, a_1, r_1, \ldots, s_H, a_H, r_H, s_{H+1})$ is specified by $\mu$ and $\pi$ as follows: $s_1 \sim \mu$, and for $t=1, \ldots, H$, $a_t \sim \pi_0(s_t)$, $s_{t+1} \sim P(\cdot|s_t, a_t)$, and $r_t$ has mean $R(s_t, a_t)$ conditioned on $(s_t, a_t)$. We refer to such a distribution as $\tau \sim (\mu, \pi)$ in short. 
{The $H$-step discounted value of $\pi$ is
\setlength{\belowdisplayskip}{0.5em} \setlength{\belowdisplayshortskip}{0.25em}
\setlength{\abovedisplayskip}{0.5em} \setlength{\abovedisplayshortskip}{0.25em}
\begin{align}
\truev \defeq \Ebig{\tau \sim (\mu, \pi)}{\sum_{t=1}^H \gamma^{t-1} r_t}.
\end{align}
When the value of $\pi$ is conditioned on $s_1=s$ (and $a_1=a$), we define it as the state (and action) value function $V^{\pi, H}(s)$ (and $Q^{\pi, H}(s, a)$). In some discounted problems the true horizon is infinite, but for the purpose of policy value evaluation we can still use a finite $H$ (usually set to be on the order of $O(1/(1-\gamma))$) so that $\truev$ approximates $v^{\pi, \infty}$ with a bounded error that diminishes as $H$ increases.}

\subsection{Off-policy Value Evaluation}
\label{sec:problem}

For simplicity, we assume that the data (a set of length-$H$ trajectories) is sampled using a fixed stochastic policy\footnote{Analyses in this paper can be easily extended to handle data trajectories that are associated with different behavior policies.} $\pi_0$, known as the \emph{behavior policy}. Our goal is to estimate $v^{\pi_1, H}$, the value of a given \emph{target policy} $\pi_1$ from data trajectories. Below we review two popular families of estimators for off-policy value evaluation.

\textbf{Notation}~
Since we are only interested in the value of $\pi_1$, the dependence of value functions on policy is omitted. 
In terms like $V^{\pi_1, H-t+1}(s_t)$, we also omit the dependence on horizon and abbreviate as $V(s_t)$, assuming there are $H+1-t$ remaining steps. Also, all (conditional) expectations are taken with respect to the distribution induced by $(\mu,\pi_0)$, unless stated otherwise.  Finally, 
we use the shorthand: $\Expe{t}{\cdot} \defeq$ $\Expe{}{\cdot \cond s_1, a_1, \ldots, s_{t-1}, a_{t-1}}$ for conditional expectations, and $\var{t}{\cdot}$ for variances similarly.

\subsubsection{Regression Estimators}
\label{sec:mdl-estm}

If the true parameters of the MDP are known, the value of the target policy can be computed recursively by the Bellman equations: let $V^0(s)\equiv0$, and for $h=1, 2, \ldots, H$,
\begin{align}
Q^{h}(s, a) \defeq &~ \Expe{s' \sim P(\cdot | s, a)}{ R(s, a) + \gamma V^{h-1}(s')}\,, \\
V^{h}(s) \defeq &~ \Expe{a\sim \pi_1(\cdot|s)}{Q^{h}(s, a)}\,.
\label{eq:bellman}
\end{align}
This suggests a two-step, \emph{regression based} procedure for off-policy value evaluation: first, fit an MDP model $\M$ from data; second, compute the value function from \eqnref{eq:bellman} using the estimated parameters $\P$ and $\R$.  Evaluating the resulting value function, $\V^{H}(s)$, on a sample of initial states and the average will be an estimate of $v^{\pi_1,H}$. (Alternatively, one could generate artificial trajectories for evaluation without explicitly referring to a model~\citep{Fonteneau13Batch}.) When an exact state representation is used and each state-action pair appears sufficiently often in the data, such regression estimators have provably low variances and negligible biases~\cite{mannor2007bias}, and often outperform alternatives in practice~\cite{paduraru2013off}. 

However, real-world problems usually have a large or even infinite state space, and many state-action pairs will not be observed even once in the data, rendering the necessity of generalization in model fitting. To generalize, 
one can either apply function approximation to fitting $\M$ \cite{jong2007model,grunewalder2012modelling}, or to fitting the value function directly~\cite{bertsekas1996neuro,Sutton98Reinforcement,dann2014policy}. While the use of function approximation makes the problem tractable, it can introduce bias to the estimated value when the MDP parameters or the value function cannot be represented in the corresponding function class. Such a bias is in general hard to quantify from data, thus breaks the credibility of estimations given by regression based approaches~\cite{farahmand2011model,Marivate15Improved,jiang2015abstraction}.

\subsubsection{Importance Sampling Estimators}

The IS estimator provides an unbiased estimate of $\pi_1$'s value by averaging the following function of each trajectory $(s_1\I, a_1\I, r_1\I, \ldots, s_{H+1}\I)$ in the data: define the per-step importance ratio as $\rho_t\I \defeq \pi_1(a_t\I | s_t\I) / \pi_0(a_t\I | s_t\I)$, and the cumulative importance ratio $\rho_{1:t}\defeq\prod_{t'=1}^t \rho_{t'}$; the basic (trajectory-wise) IS estimator, and an improved step-wise version are given as follows:
\begin{align}
\textstyle \is\I \defeq \rho_{1:H}\I \cdot \big(\sum_{t=1}^H \gamma^{t-1} r_t\I\big), \label{eq:is}\\
\textstyle \sis\I \defeq \sum_{t=1}^H \gamma^{t-1} \rho_{1:t}\I~ r_t\I.
\label{eq:step-is}
\end{align}
Given a dataset $D$, the IS estimator is simply the average estimate over the trajectories, namely $\frac{1}{|D|}\sum_{i=1}V_{\text{IS}}^{(i)}$, where $|D|$ is the number of trajectories in $D$ and $V_{\text{IS}}^{(i)}$ is IS applied to the $i$-th trajectory. (This averaging step will be omitted for the other estimators in the rest of this paper, and we will only specify the estimate for a single trajectory). Typically, IS (even the step-wise version) suffers from very high variance, which easily grows exponentially in horizon.

A variant of IS, weighted importance sampling (WIS), is a biased but consistent estimator, given as follows together with its step-wise version: define $w_t = \sum_{i=1}^{|D|} \rho_{1:t}^{(i)} / |D|$ as the average cumulative important ratio at horizon $t$ in a dataset $D$, then from each trajectory in $D$, the estimates given by trajectory-wise and step-wise WIS are respectively
\begin{align} 
\textstyle V_{\text{WIS}} = \frac{\rho_{1:H}}{w_H}\big(\sum_{t=1}^H \gamma^{t-1} r_t\big), \label{eq:wis}\\
\textstyle V_{\text{step-WIS}} = \sum_{t=1}^H \gamma^{t-1} \frac{\rho_{1:t}}{w_t} r_t\,.
\label{eq:step-wis}
\end{align}
WIS has lower variance than IS, and its step-wise version is considered as the most practical point estimator in the IS family~\cite{precup2000temporal,thomas2015safe}. We will compare to the step-wise IS/WIS baselines in the experiments.

\subsection{Doubly Robust Estimator for Contextual Bandits}

Contextual bandits may be considered as MDPs with horizon $1$, and the sample trajectories take the form of $(s,a,r)$. 
{Suppose now we are given an estimated reward function $\R$, possibly from performing regression over a separate dataset, then the doubly robust estimator for contextual bandits~\cite{langford2011doubly} is defined as:
\setlength{\belowdisplayskip}{0.5em} \setlength{\belowdisplayshortskip}{0.25em}
\setlength{\abovedisplayskip}{0.5em} \setlength{\abovedisplayshortskip}{0.25em}
\begin{align}
\dr\I \defeq \V(s\I)  + \rho\I \left(r\I - \R(s\I, a\I)\right),
\label{eq:dr}
\end{align}
where $\rho \defeq \frac{\pi_1(a|s)}{\pi_0(a|s)}$ and $\V(s) \defeq 
\sum_a \pi_1(a|s)\R(s,a)$.
It is easy to verify that $\V(s) = \Expe{a \sim \pi_0}{\rho \R(s,a)}$, as long as $\R$ and $\rho$ are independent, which implies the unbiasedness of the estimator.  Furthermore, if $\R(s,a)$ is a good estimate of $r$, the magnitude of $r-\R(s,a)$ can be much smaller than that of $r$.  Consequently, the variance of $\rho(r-\R(s,a))$ \emph{tends to} be smaller than that of $\rho r$, implying that DR often has a lower variance than IS~\cite{langford2011doubly}.

In the case where the importance ratio $\rho$ is unknown, DR estimates both $\rho$ and the reward function from data using some parametric function classes.  The name ``doubly robust'' refers to fact that if \emph{either} function class is properly specified, the DR estimator is asymptotically unbiased, offering two chances to ensure consistency.
In this paper, however, we are only interested in DR's variance-reduction benefit. 

\textbf{Requirement of independence}~
In practice, the target policy $\pi_1$ is often computed from data, and for DR to stay unbiased, $\pi_1$ should not depend on the samples used in \eqnref{eq:dr}; the same requirement applies to IS. While $\R$ should be independent of such samples as well, it is not required that $\pi_1$ and $\R$ be independent of each other. For example, we can use the same dataset to compute $\pi_1$ and $\R$, although an independent dataset is still needed to run the DR estimator in \eqnref{eq:dr}. In other situations where $\pi_1$ is given directly, to apply DR we can randomly split the data into two parts, one for fitting $\R$ and the other for applying \eqnref{eq:dr}.
The same requirements and procedures apply to the sequential case (discussed below). In Section~\ref{sec:exp}, we will empirically validate our extension of DR in both kinds of situations.

\section{DR Estimator for the Sequential Setting}

\subsection{The Estimator}
We now extend the DR estimator for bandits to the sequential case.  A key observation is that \eqnref{eq:step-is} can be written in a recursive form. Define $\sis\ii{0} \defeq 0$, and for $t=1, \ldots, H$,
\begin{align}
\sis\ii{H+1-t} \defeq \rho_t\I \left(r_t\I + \gamma \sis\ii{H-t} \right).
\label{eq:step-is-recursive}
\end{align}
It can be shown that $\sis\ii{H}$ is equivalent to $\sis\I$ given in \eqnref{eq:step-is}. While the rewriting is straight-forward, the recursive form provides a novel and interesting insight that is key to the extension of the DR estimator: that is, we can view the step-wise importance sampling estimator as dealing with a bandit problem at each horizon $t=1,\ldots, H$, where $s_t$ is the context, $a_t$ is the action taken, and the observed stochastic return is $r_t + \gamma \sis\ii{H-t}$, whose expected value is $Q(s_t, a_t)$. 
{Then, if we are supplied with $\Q$, an estimate of $Q$ (possibly via regression on a separate dataset), we can apply the bandit DR estimator at each horizon, and obtain the following unbiased estimator: define $\dr\ii{0}\defeq0$, and
\setlength{\belowdisplayskip}{0.5em} \setlength{\belowdisplayshortskip}{0.25em}
\setlength{\abovedisplayskip}{0.5em} \setlength{\abovedisplayshortskip}{0.25em}
\begin{align}
\hspace*{-.5em}\dr\ii{H+1-t} \defeq \V(s_t\I) + \rho_t\I \Big( r_t + \gamma \dr\ii{H-t} - \Q(s_t\I, a_t\I) \Big)\,.
\label{eq:rdr}
\end{align}
The DR estimate of the policy value is then $\dr\defeq\dr\ii{H}$.}


\subsection{Variance Analysis}
\label{sec:variance}

In this section, we analyze the variance of DR in \thmref{thm:variance} and show that DR is preferable than step-wise IS when a good value function $\Q$ is available. 
The analysis is given in the form of the variance of the estimate for a \emph{single} trajectory, and the variance of the estimate averaged over a dataset $D$ will be that divided by $|D|$  due to the \iid~nature of $D$.
Due to space limit, the proof is deferred to Appendix~\ref{sec:var}.
\lihong{At some point, we may want to add a sentence or two about why we do not give a theoretical comparison to WIS.}
\begin{theorem}
\label{thm:variance}
\setlength{\belowdisplayskip}{0.5em} \setlength{\belowdisplayshortskip}{0.25em}
\setlength{\abovedisplayskip}{0.5em} \setlength{\abovedisplayshortskip}{0.25em}
$\dr$ is an unbiased estimator of $v^{\pi_1, H}$, whose variance is given recursively as follows: $\forall t=1,\ldots, H,$
\begin{align}
&~ \var{t}{\dr^{H+1-t} \csa} = \var{t}{V(s_t)}  + \Ebig{t}{\var{t}{\rho_t \Delta(s_t, a_t) \cond s_t}} \nonumber\\
& \quad + \Ebig{t}{\rho_t^2 ~\var{t+1}{r_t}} + \Ebig{t}{\gamma^2 \rho_t^2 ~\var{t+1}{\dr^{H-t}}},
\label{eq:drVar}
\end{align}
where $\Delta(s_t, a_t)\defeq\Q(s_t, a_t) - Q(s_t, a_t)$ and \\$\var{H+1}{\dr^{0} \cond s_H, a_H} = 0$.
\end{theorem}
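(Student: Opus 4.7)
The plan is a backward induction on $t = H, H-1, \ldots, 1$, where the inductive step is driven by two successive applications of the law of total variance applied to the recursion $\dr^{H+1-t} = \V(s_t) + \rho_t(r_t + \gamma \dr^{H-t} - \Q(s_t,a_t))$.

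Before touching variances, I would first establish unbiasedness by the same backward induction. The base case $\dr^0 = 0$ trivially satisfies $\Expe{H+1}{\dr^0 \cond s_H,a_H} = 0 = V(s_{H+1})$-style relation at the leaf. For the inductive step, assume $\Expe{t+1}{\dr^{H-t}} = V(s_{t+1})$ whenever conditioned on the history through $(s_t,a_t,s_{t+1})$. Taking conditional expectation of $\dr^{H+1-t}$ given $s_t,a_t$, the reward contributes $R(s_t,a_t)$, the recursive term gives $\gamma \Expe{}{V(s_{t+1})\cond s_t,a_t}$, and $\Q(s_t,a_t)$ is deterministic; thus $\Expe{}{r_t + \gamma \dr^{H-t} - \Q(s_t,a_t)\cond s_t,a_t} = Q(s_t,a_t) - \Q(s_t,a_t) = -\Delta(s_t,a_t)$. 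Averaging over $a_t \sim \pi_0(\cdot|s_t)$ uses the key importance-sampling identity $\Expe{a\sim\pi_0}{\rho_t\, f(s_t,a)\cond s_t} = \sum_a \pi_1(a|s_t)f(s_t,a)$, so the $\rho_t\Delta$ term averages to $\widehat V(s_t) - V(s_t)$, which cancels exactly with the leading $\V(s_t)$ term, leaving $\Expe{}{\dr^{H+1-t}\cond s_t} = V(s_t)$.

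For the variance recursion, I would apply the law of total variance first with respect to $s_t$ (conditional on the history $\mathcal F_{t-1}$). The ``variance of the conditional mean'' piece becomes $\var{t}{V(s_t)}$ by the unbiasedness computation above, which accounts for the first term in the claimed identity. It then remains to analyze $\Expe{t}{\var{}{\dr^{H+1-t}\cond s_t}}$, and since $\V(s_t)$ is measurable given $s_t$, this inner variance equals $\var{}{\rho_t(r_t + \gamma\dr^{H-t} - \Q(s_t,a_t))\cond s_t}$. A second application of the law of total variance, now conditioning additionally on $a_t$, yields a conditional-mean term equal to $\var{}{-\rho_t\Delta(s_t,a_t)\cond s_t} = \var{}{\rho_t\Delta(s_t,a_t)\cond s_t}$ (the second term in the theorem) plus $\Expe{}{\rho_t^2\,\var{}{r_t + \gamma\dr^{H-t}\cond s_t,a_t}\cond s_t}$, since $\rho_t$ and $\Q(s_t,a_t)$ are deterministic given $(s_t,a_t)$.

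The last step is to split the inner variance across $r_t$ and $\dr^{H-t}$. This is where the main care is needed and is the step I expect to be the most delicate: under the standard MDP assumption, $r_t$ is conditionally independent of $(s_{t+1},a_{t+1},r_{t+1},\ldots)$ given $(s_t,a_t)$, and $\dr^{H-t}$ is a function of exactly those future variables, so the covariance between $r_t$ and $\gamma\dr^{H-t}$ vanishes conditionally. The Markov property further lets me identify $\var{}{r_t\cond s_t,a_t} = \var{t+1}{r_t}$ and $\var{}{\dr^{H-t}\cond s_t,a_t} = \var{t+1}{\dr^{H-t}}$, which produces the last two terms of the claimed recursion and closes the induction via the hypothesis on $\var{t+1}{\dr^{H-t}}$. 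The base case for the variance is $\var{H+1}{\dr^0\cond s_H,a_H} = 0$, matching the statement.
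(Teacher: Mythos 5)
Your proposal is correct and follows essentially the same route as the paper's proof: a backward induction in which the step-$t$ variance is split into the transition, action, reward, and future components using the unbiasedness identity $\Expe{a_t\sim\pi_0}{\rho_t\Delta(s_t,a_t)\cond s_t}=\V(s_t)-V(s_t)$ and the conditional independence (given $s_t,a_t$) of the reward noise and the future-trajectory noise. The paper obtains the same four terms by directly expanding $\Expe{t}{(\dr^{H+1-t})^2}-(\Expe{t}{V(s_t)})^2$ and noting the cross terms vanish, which is just an inline version of your two applications of the law of total variance.
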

On the RHS of \eqnref{eq:drVar}, the first 3 terms are variances due to different sources of randomness at time step $t$: state transition randomness, action stochasticity in $\pi_0$, and reward randomness, respectively; the 4th term contains the variance from future steps.  The key conclusion is that DR's variance depends on $\Q$ via the error function $\Delta = \Q - Q$ in the 2nd term, hence DR with a good $\Q$ will enjoy reduced variance, and in general outperform step-wise IS as the latter is simply DR's special case  with a trivial value function $\Q\equiv 0$.

\subsection{Confidence Intervals}
As mentioned in the introduction, an important motivation for off-policy value evaluation is to guarantee safety before deploying a policy. For this purpose, we have to characterize the uncertainty in our estimates, usually in terms of a confidence interval (CI). The calculation of CIs for DR is straight-forward, since DR is an unbiased estimator applied to \iid\ trajectories and standard concentration bounds apply. For example, Hoeffding's inequality states that for random variables with bounded range $b$, the deviation of the average from $n$ independent samples from the expected value is at most $b\sqrt{\frac{1}{2n}\log{\frac{2}{\delta}}}$ with probability at least $1-\delta$. In the case of DR, $n=\setcard{D}$ is the number of trajectories, $\delta$ the chosen confidence level, and $b$ the range of the estimate, which is a function of the maximal magnitudes of $r_t$, $\Q(s_t, a_t)$, $\rho_t$ and $\gamma$. The application of more sophisticated bounds for off-policy value evaluation in RL can be found in \citet{thomas2015high}. In practice, however, strict CIs are usually too pessimistic, and 
normal approximations are used instead~\cite{thomas2015high2}.  In the experiments, we will see how DR with normally approximated CIs can lead to more effective and reliable policy improvement than IS.

\subsection{An Extension}
\label{sec:drv2}
From \thmref{thm:variance}, it is clear that DR only reduces the variance due to action stochasticity, and may suffer a large variance even with a perfect Q-value function $\Q=Q$, as long as the MDP has substantial stochasiticity in rewards and/or state transitions. 
It is, however, possible to address such a limitation. For example,  one modification of DR that further reduces the variance in state transitions is: 
\setlength{\belowdisplayskip}{0.5em} \setlength{\belowdisplayshortskip}{0.25em}
\setlength{\abovedisplayskip}{0.5em} \setlength{\abovedisplayshortskip}{0.25em}
\begin{align}
& \drv\ii{H+1-t} = \V(s_t\I) + \rho_t\I \Big( r_t + \gamma \drv\ii{H-t} \nonumber \\
& ~~ - \R(s_t\I, a_t\I) - \gamma \V(s_{t+1}\I) {\textstyle \frac{\P(s_{t+1}\I|s_t\I, a_t\I)}{P(s_{t+1}\I|s_t\I, a_t\I)}} \Big),
\label{eq:drv2}
\end{align}
where $\P$ is the transition probability of the MDP model that we use to compute $\Q$. While we can show that this estimator is unbiased and reduces the state-transition-induced variance with a good reward \& transition functions $\R$ and $\P$ (we omit proof), it is impractical as the true transition function $P$ is unknown. However, in problems where we are confident that the transition dynamics can be estimated accurately (but the reward function may be poorly estimated),
we can assume that $P(\cdot)=\P(\cdot)$, and the last term in \eqnref{eq:drv2} becomes simply $\gamma\V(s_{t+1})$. This generally reduces more variance than the original DR at the cost of introducing a small bias. The bias is bounded in Proposition~\ref{prop:drv2bias}, whose proof is deferred to \appref{sec:drv2bias}. In Section~\ref{sec:kddexp} we will demonstrate the use of such an estimator by an experiment.
\vspace{0mm}
\begin{proposition}
Define $\epsilon = \max_{s,a}\|\P(\cdot|s,a) - P(\cdot|s,a)\|_1$.  Then, the bias of DR-v2, computed by \eqnref{eq:drv2} with the approximation $\P/P \equiv 1$, is bounded by
$\epsilon V_{\max} \sum_{t=1}^H \gamma^t$, where $V_{\max}$ is a bound on the magnitude of $\V$.
\label{prop:drv2bias}
\end{proposition}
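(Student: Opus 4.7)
The plan is to set up a one-step recursion for the bias and then unroll it. Let $\drvv\ii{k}$ denote the estimator in \eqnref{eq:drv2} with $\P/P$ replaced by $1$, applied to a trajectory with $k$ remaining steps, and let $\V^k$ denote the $k$-step value of $\pi_1$ in the estimated MDP $\M = \langle S, A, \P, \R, \gamma\rangle$. Define the pointwise bias $b^{(k)}(s) \defeq \Expe{}{\drvv\ii{k} \cond s_1 = s} - V^{\pi_1, k}(s)$, taken over trajectories generated by $(s,\pi_0)$. Since $\drvv\ii{0} \equiv 0$ and $V^{\pi_1, 0} \equiv 0$, the base case is $b^{(0)} \equiv 0$.

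The first real step is to derive a clean recursion for $b^{(k)}$. I would take the conditional expectation of the definition of $\drvv\ii{k}$ at state $s$, turning every $\rho$-weighted expectation over $(a,s')\sim(\pi_0,P)$ into an expectation over $(a,s')\sim(\pi_1,P)$. The reward-difference term $\Expe{}{\rho(r-\R(s,a))}$ yields $\sum_a \pi_1(a|s)(R(s,a)-\R(s,a))$, the bootstrap term produces $\gamma\Expe{a\sim\pi_1, s'\sim P}{V^{\pi_1,k-1}(s') + b^{(k-1)}(s')}$, and the subtracted estimated value produces $\gamma\Expe{a\sim\pi_1, s'\sim P}{\V^{k-1}(s')}$. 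Subtracting the true Bellman expansion of $V^{\pi_1,k}(s)$ kills the true-reward and true-bootstrap terms, and then invoking the estimated-MDP Bellman equation $\V^k(s) = \sum_a \pi_1(a|s)[\R(s,a) + \gamma\sum_{s'}\P(s'|s,a)\V^{k-1}(s')]$ cancels $\V^k(s) - \sum_a \pi_1(a|s)\R(s,a)$ against $\gamma\sum_a\pi_1(a|s)\sum_{s'}\P(s'|s,a)\V^{k-1}(s')$. What survives is exactly the transition-model mismatch:
\begin{align*}
b^{(k)}(s) =\ & \gamma \sum_a \pi_1(a|s) \sum_{s'} \bigl(\P(s'|s,a) - P(s'|s,a)\bigr) \V^{k-1}(s') \\
& + \gamma \sum_a \pi_1(a|s) \sum_{s'} P(s'|s,a)\, b^{(k-1)}(s').
\end{align*}

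The rest is essentially a standard simulation-lemma style unrolling. Hölder's inequality bounds the first term in absolute value by $\gamma \cdot \epsilon \cdot V_{\max}$, since $\|\P(\cdot|s,a)-P(\cdot|s,a)\|_1 \le \epsilon$ and $\|\V^{k-1}\|_\infty \le V_{\max}$. The second term is bounded in sup-norm by $\gamma \|b^{(k-1)}\|_\infty$. So $\|b^{(k)}\|_\infty \le \gamma \epsilon V_{\max} + \gamma \|b^{(k-1)}\|_\infty$, and iterating from $b^{(0)} \equiv 0$ gives $\|b^{(H)}\|_\infty \le \epsilon V_{\max}\sum_{k=1}^{H-1}\gamma^k$, which is in turn dominated by $\epsilon V_{\max}\sum_{t=1}^{H}\gamma^t$ as claimed. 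Averaging $b^{(H)}(s)$ over $s_1 \sim \mu$ preserves the bound.

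The only real obstacle is the bookkeeping in the recursion derivation, because it is easy to confuse true vs.\ estimated quantities: the reward $R$ and transition $P$ enter through the trajectory distribution and the true Bellman equation, while $\R$ and $\P$ enter through $\V^k$ via the estimated Bellman equation, and the crucial cancellation between these two layers is what isolates the $(\P - P)\cdot \V^{k-1}$ term as the sole source of one-step bias. Once that identity is in hand, the geometric accumulation over $H$ steps is routine.
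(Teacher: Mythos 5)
Your proof is correct, and it reaches the bound by a slightly different route than the paper. The paper never compares $\drvv$ (the version with $\P/P\equiv 1$) to the true value directly; instead it defines $\beta_t = \Ebig{t}{\drvv^{H+1-t} - \drv^{H+1-t}}$, the gap to the \emph{exact-ratio} estimator $\drv$, invokes the (unproven-in-the-paper) unbiasedness of $\drv$, and gets the recursion $\beta_t \le \Ebig{t}{\rho_t\gamma\beta_{t+1}} + \gamma\epsilon V_{\max}$ using $\Expe{}{\rho_t}=1$ and the same $\ell_1$--$\ell_\infty$ bound on $\sum_{s'}\V(s')(\P-P)(s'|s_t,a_t)$ that you use. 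You instead take the true value function as the reference point and make the cancellation happen via the Bellman equation of the estimated model, which isolates the identical $(\P-P)\cdot\V^{k-1}$ term. The two arguments are equivalent in substance; yours is marginally more self-contained (it does not lean on the omitted unbiasedness proof of $\drv$) but makes explicit the assumption that $\V^k$ exactly satisfies the Bellman equation of $\M=(\P,\R)$ — an assumption the paper's route also needs, just hidden inside the unbiasedness claim. Two minor points: your unrolled sum $\sum_{k=1}^{H-1}\gamma^k$ is an off-by-one (the recursion as you state it gives $\sum_{k=1}^{H}\gamma^k$ unless you additionally use $\V^0\equiv 0$ to kill the $k=1$ term), but either way it is dominated by the claimed $\epsilon V_{\max}\sum_{t=1}^H\gamma^t$; and your pointwise sup-norm control of $b^{(k)}$ is slightly stronger than what the paper proves, which only tracks the worst case over sample paths.
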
 
\vspace{-2mm}

\section{Hardness of Off-policy Value Evaluation}
\newcommand{\Mclass}{\mathcal{M}}
In Section~\ref{sec:drv2}, we showed the possibility of reducing variance due to state transition stochasticity in a special scenario. 
A natural question is whether there exists an estimator that can reduce such variance without relying on strong assumptions like $\P\approx P$. In this section, we answer this question by providing hardness results on off-policy value evaluation via the \emph{Cramer-Rao lower bound} (or \emph{C-R bound} for short), and comparing the C-R bound to the variance of DR.

Before stating the results, we emphasize that, as in other estimation problems, the C-R bound depends crucially on how the MDP is parameterized, because the parameterization captures our prior knowledge about the problem.  In general, the more structural knowledge is encoded in parameterization, the easier it is to recover the true parameters from data, and the lower the C-R bound will be. While strong assumptions (\eg, parametric form of value function) are often made in the \emph{training} phase to make RL problems tractable, 
one may not want to count them as prior knowledge in \emph{evaluation}, as every assumption made in evaluation 
decreases the credibility of the value estimate. (This is why regression-based methods are not trustworthy; see \secref{sec:mdl-estm}.) Therefore, we first present the result for the hardest case when no assumptions (other than discrete decisions \& outcomes) -- especially the Markov assumption that the last observation is a state
-- are made, to ensure the most credible estimate. A relaxed case is discussed afterwards.

\nan{Assumptions made in training phase can go into $\Q$}
\begin{definition}
An MDP is a \emph{discrete tree MDP} if \vspace*{-.5em}
\begin{itemize}[nosep,leftmargin=1em,labelwidth=*,align=left]
\item State is represented by history: that is, $s_t=h_t$, where $h_t\defeq o_1a_1 \cdots o_{t-1}a_{t-1}o_t$.  The $o_t$'s are called \emph{observations}.  We assume discrete observations and actions.
\item Initial states take the form of $s=o_1$. Upon taking action $a$,  a state $s=h$ can only transition to a next state in the form of $s' = h a o$, with probability $P(o|h,a)$. 
\item As a simplification, we assume $\gamma=1$, and non-zero rewards only occur at the end of each trajectory. 
An additional observation $o_{H+1}$
encodes the reward randomness so that reward function $R(h_{H+1})$ is deterministic, and an MDP is solely parameterized by transition probabilities.
\end{itemize}
\end{definition}

\begin{theorem}
\label{thm:crlb}
For discrete tree MDPs, the variance of any unbiased off-policy value estimator is lower bounded by \vspace*{-2mm}
\begin{align}
\sum_{t=1}^{H+1} \Ebig{}{\rho_{1:(t-1)}^2 \var{t}{V(s_t)}}\,.
\label{eq:crlb}
\end{align}
\end{theorem}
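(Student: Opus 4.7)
The plan is to apply the multivariate Cram\'er--Rao lower bound in the natural parameterization of the discrete tree MDP, exploiting the fact that the transition dynamics factor into independent multinomials along the tree. I would parameterize the unknown MDP by $\theta = (\mu, \{P(\cdot|h,a)\}_{h,a})$; the policies $\pi_0, \pi_1$ and the deterministic reward map $R(\cdot)$ are given. The simplex constraints are handled by fixing a reference outcome $o^\star$ for each multinomial and treating the remaining probabilities as free coordinates. Since
\begin{align*}
\log p^{\pi_0}(\tau) = \log\mu(o_1) + \sum_{t=1}^{H}\log\pi_0(a_t|h_t) + \sum_{t=1}^{H}\log P(o_{t+1}|h_t,a_t),
\end{align*}
only the $\mu$- and $P$-terms appear in the score.

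Next I would show that the Fisher information matrix is block-diagonal across the multinomials $\mu$ and $\{P(\cdot|h,a)\}$. The score for $P(\cdot|h,a)$ carries the indicator $\1{h_{|h|}=h,\,a_{|h|}=a}$: two parameters at the same depth thus have mutually exclusive indicators, so their cross term vanishes; two parameters on different depths of the same root-to-leaf path are coupled only through the indicator of the deeper pair, but conditioning on that event annihilates the deeper score's expectation and the cross term again vanishes. The same martingale-style argument kills cross terms involving $\mu$.

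For each $(h,a)$ block, the Fisher information equals $p^{\pi_0}(h)\pi_0(a|h)$ times the information matrix of one multinomial draw with parameters $P(\cdot|h,a)$. Holding the other parameters fixed, $v^{\pi_1,H}$ is linear in $P(\cdot|h,a)$:
\begin{align*}
v^{\pi_1,H} = p^{\pi_1}(h)\,\pi_1(a|h)\sum_{o} P(o|h,a)\,V(hao) + C,
\end{align*}
where $C$ is independent of $P(\cdot|h,a)$. Applying the standard multinomial CRLB to this linear functional (its variance lower bound equals $1/n_{\text{eff}}$ times the sampling variance of the coefficient function) yields the block contribution
\begin{align*}
\frac{(p^{\pi_1}(h)\pi_1(a|h))^{2}}{p^{\pi_0}(h)\pi_0(a|h)}\cdot \var{o\sim P(\cdot|h,a)}{V(hao)},
\end{align*}
and the analogous calculation for the $\mu$-block contributes $\var{o_1\sim\mu}{V(o_1)} = \var{1}{V(s_1)}$.

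Finally, for each $t\in\{2,\ldots,H+1\}$ I would sum the $(h,a)$-block contribution over $|h|=t-1$. Unpacking $p^{\pi_1}(h)/p^{\pi_0}(h)$ along the root-to-$h$ path produces the identity
\begin{align*}
\frac{(p^{\pi_1}(h)\pi_1(a|h))^{2}}{p^{\pi_0}(h)\pi_0(a|h)} = p^{\pi_0}(h)\pi_0(a|h)\cdot \bigl(\rho_{1:t-1}\bigr)^{\!2},
\end{align*}
where $\rho_{1:t-1}$ on the right is the deterministic value along the fixed path $(h,a)$; hence the level-$t$ sum collapses into $\Ebig{}{\rho_{1:t-1}^{\,2}\,\var{t}{V(s_t)}}$, and combining with the $t=1$ term from the $\mu$-block gives the claimed bound. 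I expect the main obstacle to be the careful bookkeeping that establishes block-diagonality of the Fisher information: the tree assumption is precisely what makes the different $(h,a)$-multinomials statistically independent and lets the CRLB decompose cleanly, whereas in a general MDP where $s_t$ aliases across histories the parameters would couple and this decomposition would break.
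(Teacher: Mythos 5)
Your proposal is correct, and it reaches the paper's bound by the same overall strategy (a Cram\'er--Rao computation over the per-$(h,a)$ transition multinomials, each contributing $\tfrac{P_1(ha)^2}{P_0(ha)}\var{}{V(hao)\cond h,a}$, then re-expressed as $\Ebig{}{\rho_{1:t-1}^2\var{t}{V(s_t)}}$), but it executes the key linear-algebra step genuinely differently. The paper keeps the over-parameterized model with explicit simplex constraints $F\theta=\mathbf{1}$ and invokes the \emph{constrained} CRB: its Fisher information is then dense --- cross terms $I_{(hao),(h'a'o')}=P_0(ha)/P(o'|h',a')$ appear whenever one index is a prefix of the other --- and the bulk of the proof is the trick of adding $F^\top X^\top+XF$ to diagonalize $I$ on the null space of $F$ before inverting. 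You instead eliminate the constraints up front via a reference-category reparameterization, after which the score of each multinomial block is a conditional-mean-zero (martingale-difference) quantity given the history up to that node; this is exactly what kills the prefix cross terms and makes the information matrix block-diagonal, so the ordinary CRLB decomposes block by block with no null-space bookkeeping. Your approach is arguably more transparent about \emph{why} the tree structure decouples the problem, and it makes visible where the argument would break for aliased states (the paper handles that DAG relaxation separately in its appendix); the paper's constrained-CRB route avoids privileging a reference outcome and transfers verbatim to the DAG case by only changing the entries of $X$. One point worth making explicit if you write this up: in the reduced coordinates the Jacobian entries are $P_1(ha)\bigl(V(hao)-V(hao^\star)\bigr)$ and the block information is $P_0(ha)$ times the reference-category multinomial information, whose inverse is $\diag(p)-pp^\top$ restricted to the free coordinates; the quadratic form then evaluates to $\var{}{V(hao)\cond h,a}$ exactly as you assert, so the ``$1/n_{\text{eff}}$ times the sampling variance'' shortcut is legitimate.
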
 
\begin{observation}
\lihong{We could call it a corollary if you want to make it more prominent.}
The variance of DR applied to a discrete tree MDP when $\Q = Q$ is equal to \eqnref{eq:crlb}.
\label{thm:match}
\end{observation}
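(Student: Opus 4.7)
The plan is to specialize the variance recursion from \thmref{thm:variance} to a discrete tree MDP with $\Q=Q$, unroll it to a closed form, and verify termwise equality with \eqnref{eq:crlb}. Unbiasedness of $\dr$ is already granted by \thmref{thm:variance}, so only the variance computation is at stake.

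First I would substitute $\Q=Q$ in \eqnref{eq:drVar}, making $\Delta\equiv 0$ and killing the second term. The tree MDP assumptions further trim what remains: $\gamma=1$ erases the discount factor, and the stipulation that non-zero rewards occur only at the last step forces $\var{t+1}{r_t}=0$ for every $t<H$. What is left is the simple recursion
\[
\var{t}{\dr\ii{H+1-t}} \;=\; \var{t}{V(s_t)} \;+\; \Ebig{t}{\rho_t^2\,\var{t+1}{\dr\ii{H-t}}}
\]
for $t=1,\dots,H-1$, together with the base step $\var{H}{\dr\ii{1}} = \var{H}{V(s_H)} + \Ebig{H}{\rho_H^2\,\var{H+1}{r_H}}$ (using $\dr\ii{0}\equiv 0$).

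Next I would unroll this linear recursion from $t=1$ downward. The bookkeeping point is that each $\rho_k$ is measurable in the $\sigma$-field conditioning $\Expe{k+1}{\cdot}$ once $(s_k,a_k)$ is appended, so iterated nests $\Ebig{}{\rho_1^2\,\Expe{2}{\rho_2^2\cdots \Expe{k}{\rho_k^2 X}\cdots}}$ collapse by the tower property into $\Ebig{}{\rho_{1:k}^2\,X}$. A short induction on $t$ then yields
\[
\var{}{\dr} \;=\; \sum_{t=1}^{H}\Ebig{}{\rho_{1:(t-1)}^2\,\var{t}{V(s_t)}} \;+\; \Ebig{}{\rho_{1:H}^2\,\var{H+1}{r_H}}.
\]

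Finally I would match the leftover reward term to the $t=H+1$ summand in \eqnref{eq:crlb}. By construction the terminal observation $o_{H+1}$ absorbs all reward randomness and $R$ is a deterministic function of $s_{H+1}=h_{H+1}$, so reading $V$ at the terminal state as the realized return $R(s_{H+1})$ gives $V(s_{H+1})=r_H$ pointwise, hence $\var{H+1}{r_H}=\var{H+1}{V(s_{H+1})}$. Substituting, and using $\rho_{1:H}=\rho_{1:((H+1)-1)}$, the expression reassembles into $\sum_{t=1}^{H+1}\Ebig{}{\rho_{1:(t-1)}^2\,\var{t}{V(s_t)}}$, which is \eqnref{eq:crlb}. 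The main obstacle is the tower-property bookkeeping in the unrolling step: one must track carefully which importance ratios are measurable at which conditioning level and assemble the cumulative coefficients $\rho_{1:(t-1)}^2$ correctly; once this is done, the killing of $\Delta$, the vanishing of intermediate reward variances, and the identification $V(s_{H+1})=r_H$ are immediate from the tree MDP definition.
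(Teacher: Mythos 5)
Your proposal is correct and follows essentially the same route as the paper's own (very terse) proof: set $\Delta\equiv 0$, note that $\var{t+1}{r_t}\equiv 0$ for $t<H$, identify $\var{H+1}{V(s_{H+1})}=\var{H+1}{r_H}$ via the terminal observation $o_{H+1}$, and unfold the recursion of \eqnref{eq:drVar} with the tower property to assemble the cumulative ratios $\rho_{1:(t-1)}^2$. You have simply spelled out the bookkeeping that the paper leaves implicit.
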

\nan{Discuss more about how to interpret this result: if we have $\Q=Q$ why do we need anything else?}
\lihong{This is a good point.  By comparing \thmref{thm:variance} and the LB, we can argue that DR improves IS towards the LB.  The proximity to LB depends on $\Delta$.}
\vspace*{-1.5em}
\begin{proof}[Proof of \obsref{thm:match}]
The result follows directly by unfolding the recursion in \eqnref{eq:drVar} and noticing that $\Delta\equiv 0$, $\var{t+1}{r_t}\equiv 0$ for $t<H$, and $\var{H+1}{V(s_{H+1})} = \var{H+1}{r_H}$.
\end{proof}
\vspace*{-1em}
\textbf{Implication}~
When minimal prior knowledge is available, the lower bound in \thmref{thm:crlb} equals the variance of DR with a perfect Q-value function, hence the part of variance due to state transition stochasticity (which DR fails to improve even with a good Q-value function) is intrinsic to the problem and cannot be eliminated without extra knowledge.  Moreover, the more accurate $\Q$ is, the lower the variance DR tends to have. A related hardness result is given by \citet{Li15TowardMinimax} for MDPs with \emph{known}  transition probabilities.

\textbf{Relaxed Case}~
In \appref{app:dag}, we discuss a relaxed case where the MDP has a Directed Acyclic Graph (DAG) structure, allowing different histories of the same length to be identified as the same state, making the problem easier than the tree case. The two cases share almost identical proofs, and below we give a concise proof of \thmref{thm:crlb}; readers can consult \appref{app:dag} for a fully expanded version. 
\vspace*{-1.2em}
\newcommand{\hm}{g(h_{H+1})}
\newcommand{\diag}{\text{diag}}
\newcommand{\normF}{\widehat{F}}
\begin{proof}[\textbf{Proof of Theorem~\ref{thm:crlb}}]
In the proof, it will be convenient to index rows and columns of a matrix (or vector) by histories, so that $A_{h,h'}$ denotes the $(h,h')$ entry of matrix $A$.  Furthermore, given a real-valued function $f$, $[f(h,h')]_{h,h'}$ denotes a  matrix  whose $(h,h')$ entry is given by $f(h,h')$.

We parameterize a discrete tree MDP by $\mu(o)$ and $P(o|h, a)$, for $h$ of length $1, \ldots, H$. For convenience, we treat $\mu(o)$ as $P(o|\emptyset)$, and the model parameters can be encoded as a vector $\theta$ with $\theta_{hao} = P(o|h, a)$, where $ha$ contains $|ha| = 0, \ldots, H$ alternating observations \& actions.

These parameters are subject to the normalization constraints that have to be taken into consideration in the C-R bound, namely $\forall h, a, \sum_{o\in O} P(o|h, a) =1$. In matrix form, we have $F\theta = \mathbf{1}$, where $F$ is a block-diagonal matrix with each block being a row vector of 1's; specifically, $F_{ha, h'a'o} = \1{ha=h'a'}$. Note that $F$ is the Jacobian of the constraints.  Let $U$ be a matrix whose column vectors consist of an orthonormal basis for the null space of $F$. {From \citet[Eqn.~(3.3) and Corollary~3.10]{moore2010theory}, we obtain a Constrained Cramer-Rao Bound (CCRB): 
\setlength{\belowdisplayskip}{0.5em} \setlength{\belowdisplayshortskip}{0.5em}
\setlength{\abovedisplayskip}{0.5em} \setlength{\abovedisplayshortskip}{0.5em}
\begin{align}
K U(U^\top I U)^{-1} U^\top K^\top,
\label{eq:ccrb}
\end{align}
where $I$ is the Fisher Information Matrix (FIM) without taking the constraints into consideration, and $K$ the Jacobian of the quantity $v^{\pi_1, H}$ that we want to estimate. Our calculation of the CCRB consists of four main steps.}

\textbf{1) Calculation of $I$:}~  
By definition, the FIM $I$ is computed as
$\Ebig{}{\left(\frac{\partial \log P_0(h_{H+1})}{\partial \theta}\right) \left(\frac{\partial \log P_0(h_{H+1})}{\partial \theta}\right)^\top}$,
with {\small $P_0(h_{H+1}) \defeq \mu(o_1)  \pi_0(a_1 | o_1) P(o_2 | o_1, a_1) \ldots P(o_{H+1}|h_H, a_H)$} being the probability of observing $h_{H+1}$ under policy $\pi_0$.

Define a new notation $\hm$ as a vector of indicator functions, such that $g(h_{H+1})_{hao} =1$ whenever $hao$ is a prefix of $h_{H+1}$. Using this notation, we have 
$
\frac{\partial \log P_0(h_{H+1})}{\partial \theta} = \theta^{\circ -1} \circ \hm,
$
where $\circ$ denotes element-wise power/multiplication. We rewrite the FIM as $I =  \Ebig{}{[\theta_{h}^{-1}\theta_{h'}^{-1}]_{h,h'} \circ \left(\hm \hm^\top\right)}
= [\theta_{h}^{-1}\theta_{h'}^{-1}]_{h,h'} \circ  \Expe{}{ \hm \hm^\top}$.
Now we compute $\Expe{}{ \hm \hm^\top}$. This matrix takes 0 in all the entries indexed by $hao$ and $h'a'o'$ when neither of the two strings is a prefix of the other. 
For the other entries, without loss of generality, assume $h'a'o'$ is a prefix of $hao$; the other case is similar as $I$ is symmetric. Since $\hm_{hao}\hm_{h'a'o'} = 1$ if and only if $hao$ is a prefix of $h_{H+1}$, we have $\Expe{}{\hm_{(hao)}\cdot\hm_{(h'a'o')}} = P_0(hao)$, and consequently $I_{(hao), (h'a'o')} = \frac{P_0(hao)}{P(o|h, a)P(o'|h',a')} = \frac{P_0(ha)}{P(o'|h',a')}$.

\textbf{2) Calculation of $(U^\top I U)^{-1}$:}~
Since $I$ is quite dense, it is hard to compute the inverse of $U^\top I U$ directly. Note, however, that for any matrix $X$ with matching dimensions,
$
U^\top I U = U^\top ( F^\top X^\top + I + XF) U\,,
$
because by definition $U$ is orthogonal to $F$. Observing this, we design $X$ to make $D = F^\top X^\top + I + XF$ diagonal so that $U^\top D U$ is easy to invert. This is achieved by letting $X_{(h'a'o'), (ha)}=0$ except when $h'a'o'$ is a prefix of $ha$, in which case we set $X_{(h'a'o'),(ha)} = -\frac{P_0(ha)}{P(o'|h',a')}$. It is not hard to verify that $D$ is diagonal with $D_{(hao), (hao)} = I_{(hao), (hao)} = \frac{P_0(ha)}{P(o|h,a)}$.

With the above trick, we have $(U^\top I U)^{-1} = (U^\top D U)^{-1}$. Since CCRB is invariant to the choice of $U$, 
we choose $U$ to be $\diag(\{U_{(ha)}\})$, where $U_{(ha)}$ is a diagonal block with columns forming an orthonormal basis of the null space of the none-zero part of $F_{(ha), (\cdot)}$ (an all-1 row vector). It is easy to verify that such $U$ exists and is column orthonormal, with $FU=[0]_{(ha),(ha)}$.
We also rewrite $D = \diag(\{D_{(ha)}\})$ where $D_{(ha)}$ is a diagonal matrix with $(D_{(ha)})_{o,o} = \frac{P_0(ha)}{P(o|h,a)}$, and we have
$
U (U^\top I U)^{-1} U^\top 
= \diag(\{U_{(ha)}\big(U_{(ha)}^\top D_{(ha)} U_{(ha)}\big)^{-1} U_{(ha)}^\top\}).$

The final step is to notice that each block in the expression above is simply $\frac{1}{P_0(ha)}$ times the CCRB of a multinomial distribution $p=P(\cdot |h, a)$, which is $\diag(p)- pp^\top$~ \citep[Eqn.~(3.12)]{moore2010theory}. 

\textbf{3) Calculation of $K$:}~ Recall that we want to estimate \vspace*{-.5em}
\begin{align*}
 v =v^{\pi_1, H} \nonumber = & ~ \textstyle \sum_{o_1} \mu(o_1) \sum_{a_1} \pi_1(a_1 | o_1) \,\, \cdots\\ 
 &~ \textstyle \sum_{o_{H+1}} P(o_{H+1}|h_H, a_H) R(h_{H+1})\,.
\end{align*}
Its Jacobian, $K= \partial v / \partial \theta$, can be computed by $K_{(hao)} = P_1(ha) V(hao)$, where $P_1(o_1a_1\cdots o_ta_t) \defeq \mu(o_1)\pi_1(a_1)\cdots P(o_t|h_{t-1},a_{t-1}) \pi_1(a_t|h_t)$ is the probability of observing a sequence under policy $\pi_1$.

\textbf{4) The C-R bound:}~ Putting all the pieces together, \eqnref{eq:ccrb} is equal to \vspace*{-.5em}
\begin{align*}
&~ \textstyle \sum_{ha} \frac{P_1(ha)^2}{P_0(ha)} \Big(\sum_{o} P(o|h,a) V(hao)^2 \\ & \qquad \textstyle - \big(\sum_{o} P(o|h, a) V(hao)\big)^2\Big) \\
= &~ \textstyle \sum_{t=0}^H \sum_{|ha|=t} P_0(ha) \frac{P_1(ha)^2}{P_0(ha)^2} \var{}{V(hao) \cond h, a}.
\end{align*}
Noticing that $P_1(ha)/P_0(ha)$ is the cumulative importance ratio, and $\sum_{|ha|=t} P_0(ha) (\cdot)$ is taking expectation over sample trajectories, the lower bound is equal to \vspace*{-0.5em}
\begin{align*}
\sum_{t=0}^H \Ebig{}{\rho_{1:t}^2 \var{t+1}{V(s_{t+1})}}
= \sum_{t=1}^{H+1} \Ebig{}{\rho_{1:(t-1)}^2 \var{t}{V(s_{t})}}.
\end{align*}
\renewcommand{\qedsymbol}{}
\end{proof}
\vspace*{-2.5em}

\section{Experiments}
\label{sec:exp}
Throughout this section, we will be concerned with the comparison among the following estimators.  For compactness, 
we drop the prefix ``step-wise'' from step-wise IS \& WIS. 
Further experiment details can be found in Appendix~\ref{app:exp}.
\begin{compactenum}
\item (IS) Step-wise IS of \eqnref{eq:step-is};
\item (WIS) Step-wise WIS of \eqnref{eq:step-wis};
\item (REG) Regression estimator (details to be specified for each domain in the ``model fitting'' paragraphs);
\item (DR) Doubly robust estimator of \eqnref{eq:rdr};
\item (DR-bsl) DR with a state-action independent $\Q$.
\end{compactenum}

\subsection{Comparison of Mean Squared Errors}
\label{sec:exp1}
In these experiments, we compare the accuracy of the point estimate given by each estimator. 
For each domain, a policy $\pi\train$ is computed as the optimal policy of the MDP model estimated from a training dataset $D\train$ (generated using $\pi_0$), and the target policy $\pi_1$ is set to be $(1 - \alpha) \pi\train + \alpha \pi_0$ for $\alpha \in\{0, 0.25, 0.5, 0.75\}$. 
The parameter $\alpha$ controls similarity between $\pi_0$ and $\pi_1$.  A larger $\alpha$ tends to make off-policy evaluation easier, at the cost of yielding a more conservative policy when $\pi\train$ is potentially of high quality.

We then apply the five estimators on a separate dataset $D\eval$ to estimate the value of $\pi_1$, compare the estimates to the groundtruth value, and take the average estimation errors across multiple draws of $D\eval$. 
Note that for the DR estimator, the supplied Q-value function $\Q$ should be independent of the data used in \eqnref{eq:rdr} to ensure unbiasedness.  We therefore split $D\eval$ further into two subsets $D\mdl$ and $D\test$, estimate $\Q$ from $D\mdl$ and apply DR on $D\test$. 

In the above procedure, DR does not make full use of data,  as the data in $D\mdl$ do not go into the sample average in \eqnref{eq:rdr}.
To address this issue, we propose a more data-efficient way of applying DR in the situation when $\Q$ has to be estimated from (a subset of) $D\eval$, and we call it \emph{$k$-fold DR}, inspired by $k$-fold cross validation in supervised learning:
we partition $D\eval$ into $k$ subsets, apply \eqnref{eq:dr} to each subset with $\Q$ estimated from the remaining data, and finally average the estimate over all subsets. Since the estimate from each subset is unbiased, the overall average remains unbiased, and has lower variance since all trajectories go into the sample average. 
We only show the results of $2$-fold DR as model fitting is time-consuming.


\begin{figure}[t]
\centering
\includegraphics[width=.95\columnwidth]
{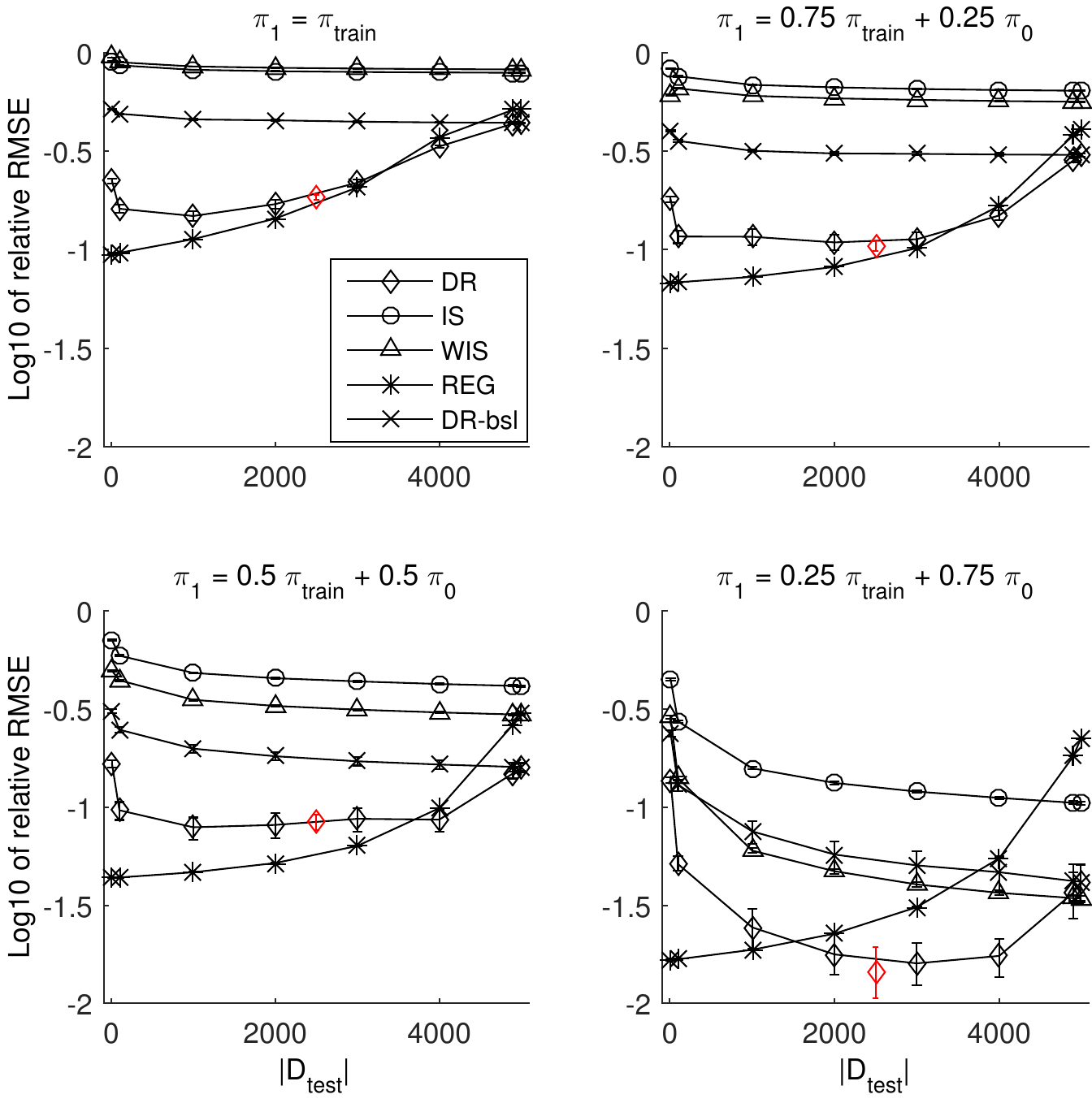}
\caption{Comparison of the methods as point estimators on Mountain Car. $5000$ trajectories are generated for off-policy evaluation, and all the results are from over $4000$ runs. The subgraphs correspond to the target policies produced by mixing $\pi\train$ and $\pi_0$ with different portions. X-axes show the size of $D\test$, the part of the data used for IS/WIS/DR/DR-bsl. The remaining data are used by the regression estimator (REG; DR uses it as $\Q$). Y-axes show the RMSE of the estimates divided by the true value in logarithmic scale. We also show the error of $2$-fold DR as an isolated point (${\color{red}\diamond}$).}
\label{fig:mc}
\end{figure}

\subsubsection{Mountain Car}
\label{sec:mcspec}

\textbf{Domain description}~
Mountain car is a popular RL benchmark problem with a $2$-dimensional continuous state space, $3$ discrete actions, and deterministic dynamics~\cite{singh1996reinforcement}. 
We set the horizon to be $100$ time steps, and the initial state distribution to uniformly random. The behavior policy is uniformly random over the $3$ actions. 

\textbf{Model fitting}~
We use state aggregations: the two state variables are multiplied by $2^6$ and $2^8$ respectively, and the rounded integers are treated as the abstract state. We then estimate an MDP model from data using a tabular approach. 

\textbf{Data sizes \& other details}~
We choose $|D\train|=2000$ and $|D\eval|=5000$. 
DR-bsl uses the step-dependent constant $\Q(s_t, a_t) = R_{\min}(1 - \gamma^{H-t+1})/(1-\gamma)$.

\textbf{Results}~
See \figref{fig:mc} for the errors of IS/WIS/DR-bsl/DR on $D\test$, and REG on $D\mdl$. As $|D\test|$ increases, IS/WIS gets increasingly better, while REG gets worse as $D\mdl$ contains less data. Since DR depends on both halves of the data, it achieves the best error at some intermediate $|D\test|$, and beats using all the data for IS/WIS in all the $4$ graphs. DR-bsl shows the accuracy of DR with $\Q$ being a constant guess, and it already outperforms IS/WIS most of the time.

\begin{figure}[t]
\centering
\includegraphics[width=.95\columnwidth] 
{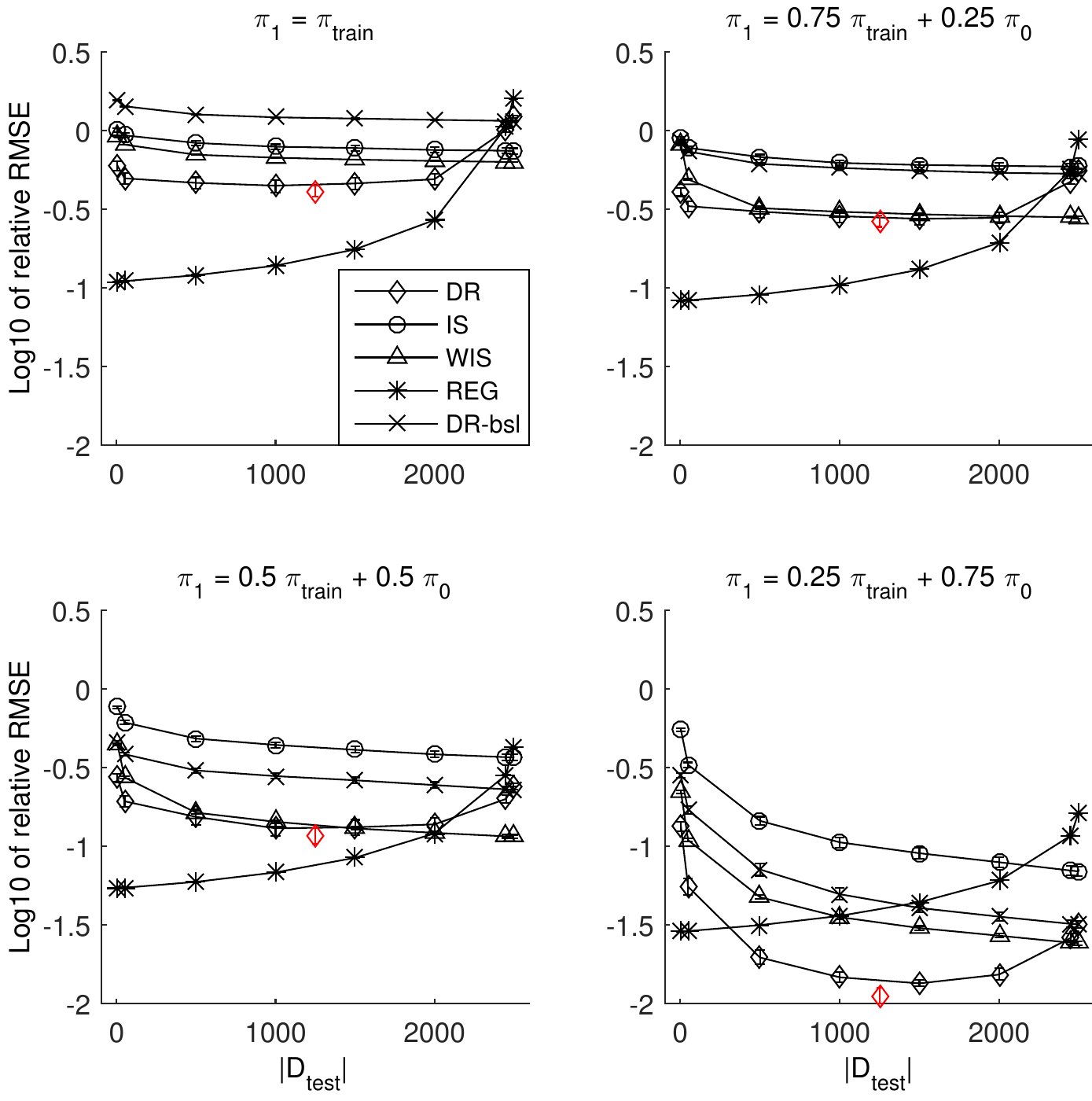}
\caption{Comparison of the methods as point estimators on Sailing ($4000$ runs). $2500$ trajectories are used in off-policy evaluation.} 
\label{fig:sail}
\end{figure}

\subsubsection{Sailing}

\textbf{Domain description}~
The sailing domain~\cite{kocsis2006bandit} is a stochastic shortest-path problem, where the agent sails on a grid. 
A state contains $4$ integer variables, each representing location or direction. There are 8 actions, and $R_{\min} = -3-4\sqrt{2}$ and $R_{\max}=0$. 

\textbf{Model fitting}~
We apply Kernel-based Reinforcement Learning~\cite{ormoneit2002kernel} with smoothing kernel $\exp(-\|\cdot\| / 0.25)$, where $\|\cdot\|$ is the $\ell_2$-distance in $S\times A$. 

\textbf{Data sizes \& other details}~
The data sizes are $|D\train|=1000$ and $|D\eval|=2500$.
DR-bsl uses the step-dependent constant $\Q(s_t, a_t) = R_{\min}/2 \cdot (1 - \gamma^{H-t+1})/(1-\gamma)$.

\textbf{Results}~
See \figref{fig:sail}. The results are qualitatively similar to Mountain Car results in \figref{fig:mc}, except that: (1) WIS is as good as DR in the 2nd and 3rd graph; (2) in the 4th graph, DR with a $3$:$2$ split outperforms all the other estimators (including the regression estimator) with a significant margin,  and a further improvement is achieved by $2$-fold DR. 

\begin{figure}[t]
\centering
\includegraphics[width=0.8\columnwidth] 
{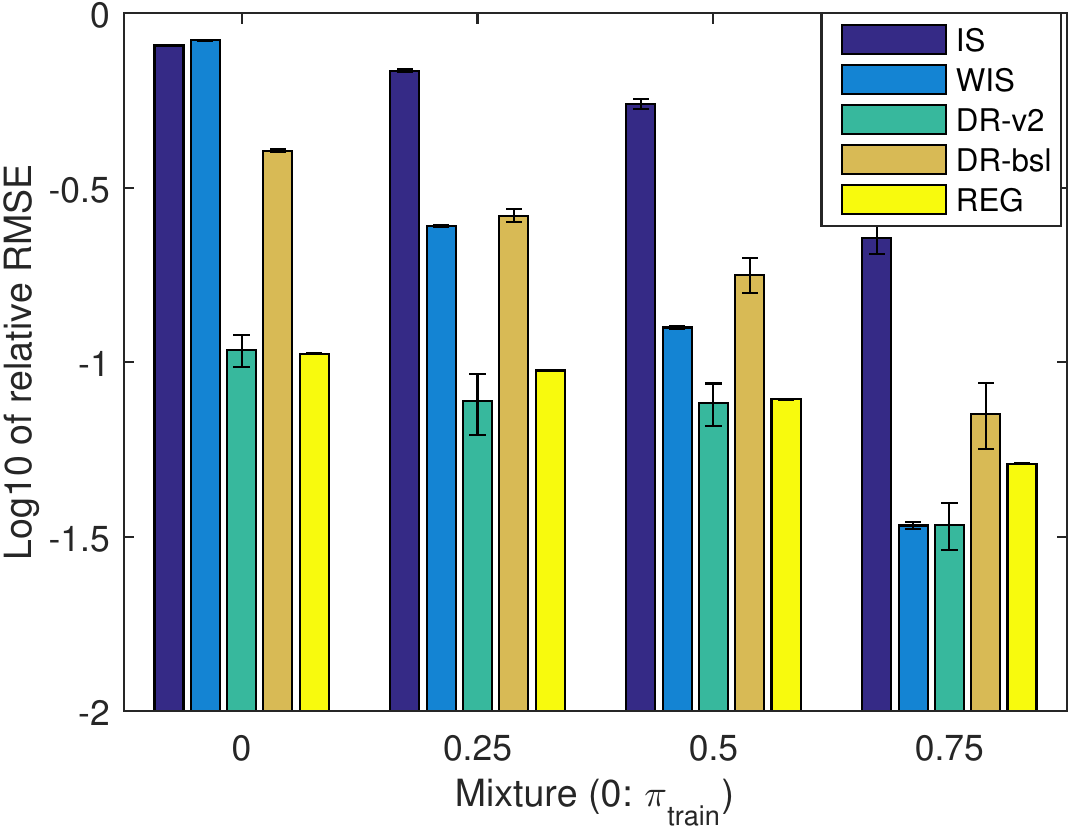}
\vspace{-1mm}
\caption{Results on the donation dataset, averaged over about $5000$ runs. 
DR-v2 is the estimator given in \eqnref{eq:drv2} with the $2$-fold trick. The whole dataset is applied to other estimators. X-axis shows the portion of which $\pi_0$ is mixed into $\pi\train$.} 
\label{fig:donation}
\vspace{-5mm}
\end{figure}

\subsubsection{KDD Cup 1998 Donation Dataset}
\label{sec:kddexp}
In the last domain, we use the donation dataset from KDD Cup 1998~\cite{kddcup1998}, which records the email interactions between an agent and potential donators. A state contains 5 integer features, and there are $12$ discrete actions. All trajectories are $22$-steps long  
and there is no discount. Since no groundtruth values are available for the target policies, we fit a simulator from the true data (see appendix for details), and use it as groundtruth for everything henceforward: the true value of a target policy is computed by Monte-Carlo roll-outs in the simulator, and the off-policy evaluation methods also use data generated from the simulator (under a uniformly random policy). Among the compared estimators, we replace DR with DR-v2 (\secref{sec:drv2}; reason explained below), 
and use the $2$-fold trick. 

The MDP model used to compute $\Q$ is estimated as follows: each state variable is assumed to evolve independently (a reasonable assumption for this dataset), and the marginal transition probabilities are estimated using a tabular approach, which is exactly how the simulator is fit from real data. Reward function, on the other hand, is fit by linear regression 
using the first $3$ state features (on the contrast, all the $5$ features are used when fitting the simulator's reward function). 
Consequently, we get a model with an almost perfect transition function and a relatively inaccurate reward function, and DR-v2 is supposed to work well in such a situation. The results are shown in \figref{fig:donation}, where DR-v2 is the best estimator in all situations: it beats WIS when $\pi_1$ is far from $\pi_0$, and beats REG when $\pi_1$ and $\pi_0$ are close.

\subsection{Application to Safe Policy Improvement}

In this experiment, we apply the off-policy value evaluation methods in safe policy improvement. Given a batch dataset $D$, the agent uses part of it ($D\train$) to find candidate policies, which may be poor due to data insufficiency and/or inaccurate approximation.  
The agent then evaluates these candidates on the remaining data ($D\test$) and chooses a policy based on the evaluation. In this common scenario, DR has an extra advantage:
$D\train$ can be reused to estimate $\Q$, and it is not necessary to hold out part of $D\test$ for regression.

Due to the high variance of IS and its variants, acting greedily w.r.t.~the point estimate is not enough to promote safety. 
A typical approach is to select the policy that has the highest lower confidence bound~\cite{thomas2015high2}, 
and hold on to the current behavior policy if none of the bounds is better than the behavior policy's value. More specifically, the bound is $V_{\dagger} - C\sigma_{\dagger}$, where $V$ is the point estimate, $\sigma$ is the empirical standard error, and $C \ge 0$ controls confidence level. $\dagger$ is a placeholder for any method that works by averaging a function of sample trajectories; examples considered in this paper are the IS and  the DR estimators.

\begin{figure}[t]
\centering
\includegraphics[width=\columnwidth] 
{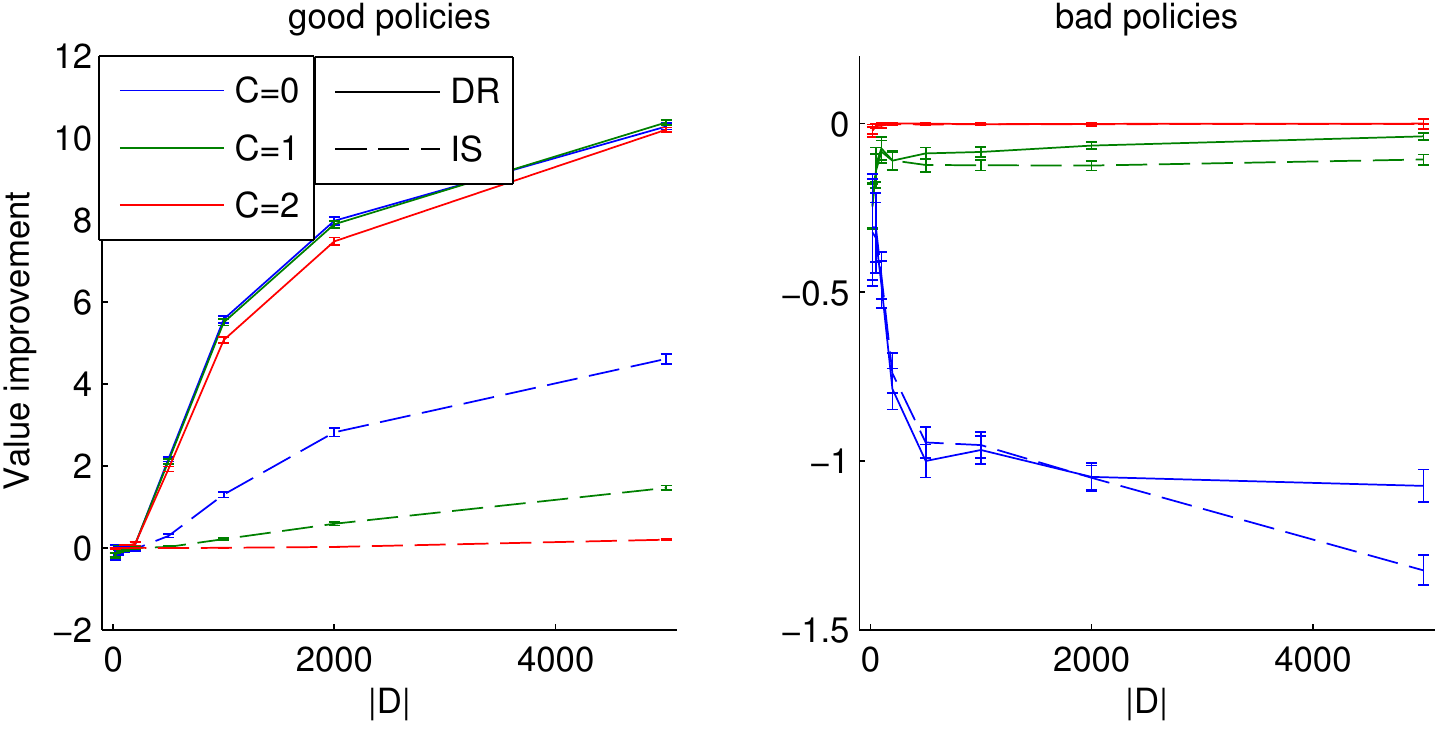}
\vspace{-3mm}
\caption{Safe policy improvement in Mountain Car. X-axis shows the size of data and y-axis shows the true value of the recommended policy subtracted by the value of the behavior policy.}
\label{fig:imprv}
\vspace{-1mm}
\end{figure}

The experiment is conducted in Mountain Car, and most of the setting is the same as Section~\ref{sec:mcspec}. Since we do not address the exploration-exploitation problem, we keep the behavior policy fixed as uniformly random, and evaluate the recommended policy once in a while as the agent gets more and more data. 
The candidate policies are generated as follows: we split $|D|$ so that $|D\train|/|D| \in \{0.2,0.4,0.6,0.8\}$; 
for each split, we compute optimal $\pi\train$ from the model estimated on $D\train$, mix $\pi\train$ and $\pi_0$ with rate $\alpha\in\{0, 0.1, \ldots, 0.9\}$, compute their confidence bounds by applying IS/DR on $D\setminus D\train$, and finally pick the policy with the highest score over all splits and $\alpha$'s.

The results are shown on the left panel of \figref{fig:imprv}. From the figure, it is clear that DR's value improvement largely outperforms IS, primarily because IS is not able to accept a target policy that is too different from $\pi_0$. However, here $\pi\train$ is mostly a good policy (except when $|D|$ is very small), hence the more aggressive an algorithm is, the more value it gets. As evidence, both algorithms achieve the best value with $C=0$, raising the concern that DR might make unsafe recommendations when $\pi\train$ is poor.

To falsify this hypothesis, we conduct another experiment in parallel, where we have $\pi\train$ minimize the value instead of maximizing it, resulting in policies worse than the behavior policy, and the results are shown on the right panel. Clearly, as $C$ becomes smaller, the algorithms become less safe, and with the same $C$ DR is as safe as IS if not better at $|D|=5000$. Overall, we conclude that DR can be a drop-in replacement for IS in safe policy improvement.

\section{Conclusions}
We proposed a doubly robust (DR) estimator for off-policy value evaluation, 
and showed its high accuracy as a point estimator and usefulness in safe policy improvement. Hardness results on the problem are also provided, and the variance of DR can match the lower bound in certain scenarios. 
Building on a preliminary version of this work, \citet{thomas2016data} showed that our DR estimator can be viewed as an application of control variates for variance reduction, and designed more advanced DR-based estimators.  
Future work includes applying such DR techniques to real-world problems to access its effectiveness in practice.

\clearpage
\section*{Acknowledgements}
We thank Xiujun Li for his help on the KDD donation dataset experiment. We also thank Satinder Singh, Michael Littman, Susan Murphy, Emma Brunskill, and the anonymous reviewers for their detailed and helpful comments, as well as Philip Thomas for an independent implementation and verification of the algorithm. 

Most of this work was done during Nan Jiang's internship in Microsoft Research. The work was partially supported by NSF grant IIS 1319365 for Nan Jiang in University of Michigan. Any opinions, findings, conclusions, or recommendations expressed here are those of the authors and do not necessarily reflect the views of the sponsors.

\bibliography{bib}
\bibliographystyle{icml2016}


\clearpage
\appendix

\section{Proof of Theorem~\ref{thm:variance}}
\label{sec:var}
\begin{proof}
For the base case $t=H+1$, since $\dr^0=V(s_{H+1})=0$, it is obvious that at the $(H+1)$-th step the estimator is unbiased with 0 variance, and the theorem holds.
For the inductive step, suppose the theorem holds for step $t+1$.  At time step $t$, we have: 
\begin{align*}
&~ \var{t}{\dr^{H+1-t}} \\
= &~ \Ebig{t}{\big(\dr^{H+1-t}\big)^2} - \Big(\Expe{t}{V(s_t) }\Big)^2 \\
= &~ \mathbb{E}_t \Big[ \left(\V(s_t) + \rho_t \big( r_t + \gamma \dr^{H-t} - \Q(s_t, a_t) \big)\right)^2\\
& \quad - V(s_t)^2 \Big] + \var{t}{V(s_t)} \\
= &~\mathbb{E}_t \Big[\Big(\rho_t Q(s_t, a_t) - \rho_t \Q(s_t, a_t) + \V(s_t) \\
& \quad + \rho_t \big( r_t + \gamma \dr^{H-t} - Q(s_t, a_t) \big)\Big)^2 - V(s_t)^2 \Big] \\
& \quad + \var{t}{V(s_t)} \\
= &~\mathbb{E}_t \Big[\Big(-\rho_t \Delta(s_t, a_t) + \V(s_t) + \rho_t ( r_t - R(s_t, a_t)) \\
& \quad + \rho_t \gamma \big( \dr^{H-t} - \Expe{t+1}{V(s_{t+1})} \big)\Big)^2 \\
& \quad - V(s_t)^2 \Big] + \var{t}{V(s_t)} \eqnumbering \label{eq:proof-step0} \\
= &~ \Ebig{t}{\Ebig{t}{\big(-\rho_t \Delta(s_t, a_t) + \V(s_t)\big)^2- V(s_t)^2 \Cond s_t}} \\
& ~~ + \Ebig{t}{\Expe{t+1}{\rho_t^2 ( r_t - R(s_t, a_t))^2}} 
+ \var{t}{V(s_t)}\\
& ~~ + \Ebig{t}{\Ebig{t+1}{\Big(\rho_t \gamma \big( \dr^{H-t} - \Expe{t+1}{V(s_{t+1})} \big)\Big)^2}} \\ 
= &~ \Ebig{t}{\var{t}{-\rho_t \Delta(s_t, a_t) + \V(s_t) \cond s_t}} + \Ebig{t}{\rho_t^2~ \var{t+1}{r_t}} \\
& \quad + \Ebig{t}{\rho_t^2 \gamma^2~ \var{}{\dr^{H-t} \cond s_t, a_t}} + \var{t}{V(s_t)} \\ 
= &~ \Ebig{t}{\var{t}{\rho_t \Delta(s_t, a_t) \cond s_t}} + \Ebig{t}{\rho_t^2~ \var{t+1}{r_t}} \\
& \quad + \Ebig{t}{\rho_t^2 \gamma^2~ \var{t+1}{\dr^{H-t}}} + \var{t}{V(s_t)}. 
\end{align*}
This completes the proof. Note that from \eqnref{eq:proof-step0} to the next step, we have used the fact that conditioned on $s_t$ and $a_t$, $r_t - R(s_t, a_t)$ and $\dr^{H-t} - \Expe{t+1}{V(s_{t+1})}$ are independent and have zero means, and all the other terms are constants.  Therefore, the square of the sum equals the sum of squares in expectation. 
\end{proof}
\section{Bias of DR-v2}
\label{sec:drv2bias}
\begin{proof}[Proof of Proposition~\ref{prop:drv2bias}]
Let $\drvv$ denote \eqnref{eq:drv2} with approximation $\P=P$. Since $\drv$ is unbiased, the bias of $\drvv$ is then the expectation of $\drvv - \drv$. Define
\newcommand{\bias}{\beta}
\begin{align*}
\bias_t = \Ebig{t}{\drvv^{H+1-t} - \drv^{H+1-t}}.
\end{align*}
Then, $\beta_1$ is the bias we try to quantify, and is a constant.  In general, $\beta_t$ is a random variable that depends on $s_1, a_1, \ldots, s_{t-1}, a_{t-1}$. Now we have
\begin{align*}
\bias_t
= &~ \mathbb{E}_t \Big[ \rho_t \gamma \left( \drvv^{H-t} - \drv^{H-t} \right) \\
& \qquad - \rho_t \gamma \V(s_{t+1}) \left(\frac{\P(s_{t+1}|s_t, a_t)}{P(s_{t+1}|s_t, a_t)} -1 \right) \Big] \\
= &~ \Ebig{t}{\rho_t \gamma \bias_{t+1}} - \Ebig{t}{ \rho_t \gamma \V(s_{t+1}) \left(\frac{\P(s_{t+1}|s_t, a_t)}{P(s_{t+1}|s_t, a_t)} -1 \right)}.
\end{align*}

In the second term of the last expression, the expectation is taken over the randomness of $a_t$ and $s_{t+1}$; we keep $a_t$ as a random variable and integrate out $s_{t+1}$, and get
\begin{align*}
 &~ \Ebig{t}{ \rho_t \gamma \V(s_{t+1}) \left(\frac{\P(s_{t+1}|s_t, a_t)}{P(s_{t+1}|s_t, a_t)} -1 \right)} \\
= &~ \Ebig{t}{ \Ebig{t+1}{\rho_t \gamma \V(s_{t+1}) \left(\frac{\P(s_{t+1}|s_t, a_t)}{P(s_{t+1}|s_t, a_t)} -1 \right)}} \\
= &~ \Ebig{t}{ \rho_t \gamma \sum_{s'}P(s'|s_t, a_t) \V(s') \left(\frac{\P(s'|s_t, a_t)}{P(s'|s_t, a_t)} -1 \right)} \\
= &~ \Ebig{t}{ \rho_t \gamma \sum_{s'} \V(s') \left(\P(s'|s_t, a_t) -P(s'|s_t, a_t) \right)}.
\end{align*}
Recall that the expectation of the importance ratio is always $1$, hence
\begin{align*}
\bias_t \le &~ \Ebig{t}{\rho_t \gamma \left(\bias_{t+1} + \epsilon V_{\max}\right)} \\
= &~ \Ebig{t}{\rho_t \gamma \bias_{t+1}} + \gamma \epsilon V_{\max}.
\end{align*}
%
With an abuse of notation, we reuse $\bias_t$ as its maximal absolute magnitude over all sample paths $s_1, a_1, \ldots, s_{t-1}, a_{t-1}$. Clearly we have $\bias_{H+1} = 0$, and
\begin{align*}
\bias_t \le \gamma (\bias_{t+1} + \epsilon V_{\max}).
\end{align*}
Hence, $\beta_1 \le \epsilon V_{\max} \sum_{t=1}^H \gamma^t$.
\end{proof}
\section{Cramer-Rao bound for discrete DAG MDPs}
\label{app:dag}

Here, we prove a lower bound for the relaxed setting where the MDP is a layered Directed Acyclic Graph instead of a tree. In such MDPs, the regions of the state space reachable in different time steps are disjoint (just as tree MDPs), but trajectories that separate in early steps can reunion at a same state later. 
\begin{definition}[Discrete DAG MDP]
An MDP is a \emph{discrete Directed Acyclic Graph (DAG) MDP} if: 
\vspace*{-.5em}
\begin{compactitem}
\item The state space and the action space are finite.
\item For any $s \in S$, there exists a unique $t \in \mathbb{N}$ such that, $\max_{\pi: S\to A} P(s_t = s \cond \pi) >0$. In other words, a state only occurs at a particular time step.
\item As a simplification, we assume $\gamma=1$, and non-zero rewards only occur at the end of each $H$-step long trajectory. We use an additional state $s_{H+1}$ to encode the reward randomness so that reward function $R(s_{H+1})$ is deterministic and the domain can be solely parameterized by transition probabilities.
\end{compactitem}
\end{definition}

\begin{theorem}
For discrete DAG MDPs, the variance of any unbiased estimator is lower bounded by
\[
\sum_{t=1}^{H+1} \Ebig{}{\frac{P_1(s_{t-1},a_{t-1})^2}{P_0(s_{t-1},a_{t-1})^2} \var{t}{V(s_{t})}},
\]
where for trajectory $\tau$, \\
$
P_0(\tau) = \mu(s_1)  \pi_0(a_1 | s_1) P(s_2 | s_1, a_1) \ldots P(s_{H+1}|s_H, a_H),
$\\
and $P_0(s_t, a_t)$ is its marginal probability; $P_1(\cdot)$ is similarly defined for $\pi_1$.
\label{thm:dag}
\end{theorem}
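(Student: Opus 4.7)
I plan to mirror the four-step CCRB computation of the tree case, adapting it to the DAG parameterization in which a state at time $t$ is an atomic object (potentially reachable from multiple predecessors) rather than an entire history. The parameters are $\theta_{s,a,s'}\defeq P(s'|s,a)$ (with $\mu$ absorbed as $P(s_1|\emptyset)$), and the simplex constraints $\sum_{s'}P(s'|s,a)=1$ stack into $F\theta=\mathbf{1}$, where $F$ is block-diagonal with one all-ones row block per $(s,a)$. I then invoke the constrained Cramer-Rao formula $KU(U^\top I U)^{-1}U^\top K^\top$.

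The score $\partial\log P_0(\tau)/\partial\theta_{s,a,s'}$ is the indicator that $\tau$ traverses $(s,a,s')$ divided by $P(s'|s,a)$. Computing second moments gives $I_{(s,a,s'),(\tilde s,\tilde a,\tilde s')}=P_0(s,a)\,P_0(\tilde s,\tilde a\mid s')$ whenever $(\tilde s,\tilde a)$ sits at a strictly later time step than $(s,a)$, equals $P_0(s,a)/P(s'|s,a)$ on the diagonal, and is $0$ otherwise (either same $(s,a)$ but $s'\ne\tilde s'$, or same time step with $(s,a)\ne(\tilde s,\tilde a)$). The key observation, analogous to the tree case, is that the nonzero off-diagonal entry depends on $s'$ but not on $\tilde s'$, since the factor $P(\tilde s'|\tilde s,\tilde a)$ in the joint transition probability cancels against its denominator in the score. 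This enables the same diagonalization trick: set $X_{(s,a,s'),(\tilde s,\tilde a)}=-P_0(s,a)\,P_0(\tilde s,\tilde a\mid s')$ when $(\tilde s,\tilde a)$ is strictly later than $(s,a)$ and $0$ otherwise, so that $D=I+XF+F^\top X^\top$ is block-diagonal with the $(s,a)$-block equal to $P_0(s,a)\cdot\operatorname{diag}(1/P(\cdot|s,a))$.

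Choosing $U=\operatorname{diag}(\{U_{(s,a)}\})$ with each $U_{(s,a)}$ an orthonormal basis of the null space of $\mathbf{1}^\top$ makes $U(U^\top I U)^{-1}U^\top$ block-diagonal, and each block is $\frac{1}{P_0(s,a)}$ times the multinomial CCRB $\operatorname{diag}(p)-pp^\top$ for $p=P(\cdot|s,a)$. The Jacobian of $v^{\pi_1,H}$ is $K_{(s,a,s')}=P_1(s,a)\,V(s')$ by the same chain-rule argument as in the tree case, but with marginal (rather than trajectory-wise) $\pi_1$-probabilities. Combining, $KU(U^\top I U)^{-1}U^\top K^\top=\sum_{(s,a)}\frac{P_1(s,a)^2}{P_0(s,a)}\var{}{V(s_{t+1})\cond s_t=s,a_t=a}$, with the inner variance over $s_{t+1}\sim P(\cdot|s,a)$; grouping terms by time step and reindexing $t\to t-1$ yields the claimed bound (the $t=1$ term coming from the $\mu$ parameters with the convention $P_0(\emptyset)=P_1(\emptyset)=1$).

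The main obstacle is the Fisher off-diagonal calculation: in the tree case two transitions are either disjoint or related by a unique prefix, whereas in a DAG a later-time $(\tilde s,\tilde a,\tilde s')$ can be reached from $s'$ along many paths. The argument nevertheless goes through by the Markov property: the joint occurrence probability factors as $P_0(s,a)\,P(s'|s,a)\,P_0(\tilde s,\tilde a\mid s')\,P(\tilde s'|\tilde s,\tilde a)$, so the two transition-probability factors cancel cleanly against the score denominators, leaving a $\tilde s'$-independent off-diagonal. Verifying this factorization, and that equal-time transitions introduce no extra cross terms in $D$ (they do not, because $X$ vanishes for non-strictly-later $(\tilde s,\tilde a)$), are the primary technical items; everything else proceeds by direct analogy with the tree proof.
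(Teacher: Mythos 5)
Your proposal is correct and follows essentially the same route as the paper's own proof: the same parameterization with simplex constraints, the same computation of the Fisher information off-diagonals via the Markov factorization (so that the entry is independent of the later next-state), the same $X$-based diagonalization, the same block-orthonormal choice of $U$ reducing each block to a scaled multinomial CCRB, and the same Jacobian $K_{(s,a,s')}=P_1(s,a)V(s')$. You have also correctly isolated the one genuinely new technical point relative to the tree case, namely that the cancellation in the off-diagonal entries survives the multiplicity of paths in the DAG.
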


\textbf{Remark}~ Compared to Theorem~\ref{thm:crlb}, the cumulative importance ratio $\rho_{1:t-1}$ is replaced by the state-action occupancy ratio $P_1(s_{t-1}, a_{t-1}) / P_0(s_{t-1}, a_{t-1})$ in Theorem~\ref{thm:dag}. The two ratios are equal when each state can only be reached by a unique sample path. In general, however, $\Expe{}{P_1(s_{t-1}, a_{t-1})^2 / P_0(s_{t-1}, a_{t-1})^2 \var{t}{V(s_{t})}} \le \Expe{}{\rho_{1:t-1}^2 \var{t}{V(s_{t})}}$, hence DAG MDPs are easier than tree MDPs for off-policy value evaluation.

Below we give the proof of Theorem~\ref{thm:dag}, which is almost identical to the proof of Theorem~\ref{thm:crlb}. 

\renewcommand{\hm}{g(\tau)}
\begin{proof}[Proof of Theorem~\ref{thm:dag}]
We parameterize the MDP by $\mu(s_1)$ and $P(s_{t+1}|s_t, a_t)$ for $t=1, \ldots, H$. For convenience we will treat $\mu(s_1)$ as $P(s_1|\emptyset)$, so all the parameters can be represented as $P(s_{t+1}|s_t, a_t)$ (for $t=0$ there is a single $s_0$ and $a$). These parameters are subject to the normalization constraints that have to be taken into consideration in the Cramer-Rao bound, namely $\forall t, s_t, a_t, \sum_{s_{t+1}} P(s_{t+1}|s_t, a_t) =1$.
\begin{align}
\begin{bmatrix}
1  \cdots  1 & & & \\
 &   1  \cdots  1 & &\\
 &  &   \ddots &\\
 &   & &   1 \cdots 1
\end{bmatrix} \theta = \begin{bmatrix} 1\\ 1\\ \vdots\\ 1 \end{bmatrix}
\end{align}
where $\theta_{s_t, a_t, s_{t+1}} = P(o|s_t, a_t)$. The matrix on the left is effectively the Jacobian of the constraints, which we denote as $F$. We index its rows by $(s_t, a_t)$, so $F_{(s_t, a_t), (s_t, a_t, s_{t+1})} = 1$ and other entries are $0$. Let $U$ be a matrix whose column vectors consist an orthonormal basis for the null space of $F$. From \citet[Eqn.~(3.3) and Corollary~3.10]{moore2010theory}, we have the Constrained Cramer-Rao Bound (CCRB) being\footnote{In fact, existing literature on Contrained Cramer-Rao Bound does not deal with the situation where the unconstrained parameters break the normalization constraints (which we are facing). However, this can be easily tackled by changing the model slightly to $P(o|h, a) = \theta_{hao} / \sum_{o'} \theta_{hao'}$, which resolves the issue and gives the same result.
} (the dependence on $\theta$ in all terms are omitted):
\begin{align}
K U(U^\top I U)^{-1} U^\top K^\top,
\label{eq:ccrb2}
\end{align}
where $I$ is the Fisher Information Matrix (FIM), and $K$ is the Jacobian of the quantity we want to estimate; they are computed below.  We start with $I$, which is
\begin{align}
I = \Ebig{}{\left(\frac{\partial \log P_0(\tau)}{\partial \theta}\right) \left(\frac{\partial \log P_0(\tau)}{\partial \theta}\right)^\top}.
\end{align}
To calculate $I$, we define a new notation $\hm$, which is a vector of indicator functions and $\hm_{s_t,a_t,s_{t+1}} =1$ when $(s_t, a_t, s_{t+1})$ appears in trajectory $\tau$. Using this notation, we have 
\begin{align}
\frac{\partial \log P_0(\tau)}{\partial \theta} = \theta^{\circ -1} \circ \hm,
\end{align}
where $\circ$ denotes element-wise power/multiplication. Then we can rewrite the FIM as
\begin{align}
I = &~ \Ebig{}{[\theta_{i}^{-1}\theta_{j}^{-1}]_{ij} \circ (\hm \hm^\top)} \nonumber\\
= &~ [\theta_{i}^{-1}\theta_{j}^{-1}]_{ij} \circ  \Expe{}{ (\hm \hm^\top)},
\end{align} 
where $[\theta_{i}^{-1}\theta_{j}^{-1}]_{ij}$ is a matrix expressed by its $(i, j)$-th element. Now we compute $\Expe{}{ \hm \hm^\top}$. On the diagonal, it is $P_0(s_t, a_t, s_{t+1})$, so the diagonal of $I$ is $\frac{P_0(s_t, a_t)}{P(s_{t+1}|s_t, a_t)}$; for non-diagonal entries whose row indexing and column indexing tuples are at the same time step, the value is 0; in other cases, suppose row is $(s_t, a_t, s_{t+1})$ and column is $s_{t'}, a_{t'}, s_{t'+1}$, and without loss of generality assume $t' < t$, then the entry is $P_0(s_{t'}, a_{t'}, s_{t'+1}, s_t,a_t, s_{t+1})$, with the corresponding entries in $I$ being $ \frac{P_0(s_{t'}, a_{t'}, s_{t'+1}, s_t,a_t, s_{t+1})}{P(s_{t'+1}|s_{t'}, a_{t'})P(s_{t+1}|s_{t}, a_t)} = P_0(s_{t'}, a_{t'}) P_0(s_t, a_t | s_{t'+1})$.

Then, we calculate $(U^\top I U)^{-1}$. To avoid the difficulty of taking inverse of this non-diagonal matrix, we apply the following trick to diagonalize $I$: note that for any matrix $X$ with matching dimensions,
\begin{align}
U^\top I U = U^\top ( F^\top X^\top + I + XF) U,
\end{align}
because by definition $U$ is orthogonal to $F$. We can design $X$ so that $D = F^\top X^\top + I + XF$ is a diagonal matrix, and $D_{(s_t, a_t, s_{t+1}),(s_t, a_t, s_{t+1})} = I_{(s_t, a_t, s_{t+1}),(s_t, a_t, s_{t+1})} = \frac{P_0(s_t, a_t)}{P(s_{t+1}|s_t,a_t)}$. This is achieved by having $XF$ eliminate all the non-diagonal entries of $I$ in the upper triangle without touching anything on the diagonal or below, and by symmetry $F^\top X^\top$ will deal with the lower triangle. The particular $X$ we take is $X_{(s_{t'}, a_{t'}, s_{t'+1}),(s_t, a_t)} = - P_0(s_{t'}, a_{t'}) P_0(s_t, a_t | s_{t'+1}) \mathbb{I}(t' < t)$, and it is not hard to verify that this construction diagonalizes $I$.

With the diagonalization trick, we have $(U^\top I U)^{-1} = (U^\top D U)^{-1}$. Since CCRB is invariant to the choice of $U$, and we observe that the rows of $F$ are orthogonal, 
we choose $U$ as follows: let $n_{(s_t, a_t)}$ be the number of $1$'s in $F_{(s_t, a_t), (\cdot)}$, and $U_{(s_t, a_t)}$ be the $n_{(s_t, a_t)}\times (n_{(s_t, a_t)}-1)$ matrix with orthonormal columns in the null space of $\begin{bmatrix} 1 & \ldots & 1\end{bmatrix}$ ($n_{(s_t, a_t)}$ $1$'s); finally, we choose $U$ to be a block diagonal matrix $U = \diag(\{U_{(s_t, a_t)}\})$, where $U_{(s_t, a_t)}$'s are the diagonal blocks, and it is easy to verify that $U$ is column orthonormal and $FU=0$. Similarly, we write $D = \diag(\{D_{(s_t, a_t)}\})$ where $D_{(s_t, a_t)}$ is a diagonal matrix with $(D_{(s_t, a_t)})_{s_{t+1},s_{t+1}} = P_0(s_t, a_t)/P(s_{t+1}|s_t,a_t)$, and
\begin{align*}
&~U (U^\top I U)^{-1} U^\top  = U (U^\top D U)^{-1} U^\top \\
= &~ U (\diag(\{U_{(s_t, a_t)}^\top\})\diag(\{D_{(s_t, a_t)}\}) \diag(\{U_{(s_t, a_t)}\}))^{-1} U \\
= &~ U \diag(\{\big(U_{(s_t, a_t)}^\top D_{(s_t, a_t)} U_{(s_t, a_t)}\big)^{-1}\}) U \\
= &~ \diag(\{U_{(s_t, a_t)}\big(U_{(s_t, a_t)}^\top D_{(s_t, a_t)} U_{(s_t, a_t)}\big)^{-1} U_{(s_t, a_t)}^\top\}). \eqnumbering \label{eq:diaged2}
\end{align*}
Notice that each block in \eqnref{eq:diaged2} is simply $1/P_0(s_t, a_t)$ times the CCRB of a multinomial distribution $P(\cdot |s_t, a_t)$. The CCRB of a multinomial distribution $p$ can be easily computed by an alternative formula~\citep[Eqn.~(3.12)]{moore2010theory}), which gives $\diag(p)- pp^\top$, so we have,
\begin{align}
&~ U_{(s_t, a_t)}\big(U_{(s_t, a_t)}^\top D_{(s_t, a_t)} U_{(s_t, a_t)}\big)^{-1} U_{(s_t, a_t)}^\top \nonumber \\
= &~ \frac{\diag(P(\cdot|s_t, a_t)) - P(\cdot|s_t, a_t) P(\cdot|s_t, a_t)^\top}{P_0(s_t, a_t)}.
\end{align}
We then calculate $K$. Recall that we want to estimate 
\begin{align}
 v =v^{\pi_1, H} \nonumber = & ~ \sum_{s_1} \mu(s_1) \sum_{a_1} \pi_1(a_1 | s_1) \ldots\\ 
 &~ \sum_{s_{H+1}} P(s_{H+1}|s_H, a_H) R(s_{H+1})\,,
\end{align}
and its Jacobian is $K = (\partial v / \partial \theta_t)^\top$, with $K_{(s_t, a_t, s_{t+1})} = P_1(s_t, a_t)V(s_{t+1})$, where $P_1(\tau) = \mu(s_1)\pi_1(a_1)\ldots P(s_{H+1}|s_{H},a_{H})$ and $P_1(s_t, a_t)$ is the marginal probability.

Finally, putting all the pieces together, we have \eqnref{eq:ccrb2} equal to
\begin{align*}
&~ \sum_{s_t, a} \frac{P_1(s_t, a_t)^2}{P_0(s_t, a_t)} \Big(\sum_{s_{t+1}} P(s_{t+1}|s_t,a_t) V(s_{t+1})^2 \\ & \qquad - \big(\sum_{s_{t+1}} P(s_{t+1}|s_{t}, a_t) V(s_{t+1})\big)^2\Big) \\
= &~ \sum_{t=0}^H \sum_{s_t} P_0(s_t,a_t) \frac{P_1(s_t,a_t)^2}{P_0(s_t,a_t)^2} \var{}{V(s_{t+1}) \cond s_t, a}\\
= &~ \sum_{t=0}^H \Ebig{}{\frac{P_1(s_t,a_t)^2}{P_0(s_t,a_t)^2} \var{t+1}{V(s_{t+1})}} \\
= &~ \sum_{t=1}^{H+1} \Ebig{}{\frac{P_1(s_{t-1},a_{t-1})^2}{P_0(s_{t-1},a_{t-1})^2} \var{t}{V(s_{t})}}.\qedhere
\end{align*}
\end{proof}
\section{Experiment Details}
\label{app:exp}

Here, we provide full details on the experiments that are omitted in the main paper due to space limit.

\subsection{Mountain Car}
\textbf{Domain Description}~~
Mountain car is a widely used benchmark problem for RL with a 2-dimensional continuous state space (position and velocity) and deterministic dynamics~\cite{singh1996reinforcement}. The state space is $[-1.2, 0.6] \times [-0.07, 0.07]$, and there are $3$ discrete actions. The agent receives $-1$ reward every time step with a discount factor $0.99$, and an episode terminates when the first dimension of state reaches the right boundary. 
The initial state distribution is set to uniformly random, and behavior policy is uniformly random over the $3$ actions. The typical horizon for this problem is $400$, which can be too large for IS and its variants, therefore we accelerate the dynamics such that given $(s, a)$, the next state $s'$ is obtained by calling the original transition function $4$ times holding $a$ fixed, and we set the horizon to $100$.
A similar modification was taken by \citet{thomas2015safe}, where every $20$ steps are compressed as one step.

\textbf{Model Construction}~~
The model we construct for this domain uses a simple discretization (state aggregation): the two state variables are multiplied by $2^6$ and $2^8$ respectively and the rounded integers are treated as the abstract state. We then estimate the model parameters from data using a tabular approach. Unseen aggregated state-action pairs are assumed to have reward $R_{\min}=-1$ and a self-loop transition. Both the models that produces $\pi\train$ and that used for off-policy evaluation are constructed in the same way.

\textbf{Data sizes \& other details}~~
The dataset sizes are $|D\train|=2000$ and $|D\eval|=5000$. 
We split $D\eval$ such that $D\test \in \{10, 100, 1000, 2000, 3000, 4000, 4900, 4990\}$. 
DR-bsl uses the step-dependent constant function
$$
\Q(s_t, a_t) = \frac{R_{\min}(1 - \gamma^{H-t+1})}{1-\gamma}.
$$
Since the estimators in the IS family typically has a highly skewed distribution, the estimates can occasionally go largely out of range, and we crop such outliers in $[V_{\min}, V_{\max}]$ to ensure that we can get statistically significant experiment results within a reasonable number of simulations. The same treatment is also applied to the experiment on Sailing.

\subsection{Sailing}

\textbf{Domain Description}~~
The sailing domain~\cite{kocsis2006bandit} is a stochastic shortest-path problem, where the agent sails on a grid (in our experiment, a map of size $10 \times 10$) with wind blowing in random directions, aiming at the terminal location on the top-right corner. The state is represented by $4$ integer variables, representing either location or direction. At each step, the agent chooses to move in one of the $8$ directions, 
(moving against the wind or running off the grid is prohibited), 
and receives a negative reward that depends on moving direction, wind direction, and other factors, ranging from $R_{\min} = -3-4\sqrt{2}$ to $R_{\max}=0$ (absorbing). The problem is non-discounting, and we use $\gamma=0.99$ for easy convergence when computing $\pi\train$.

\textbf{Model Construction}~~
We apply Kernel-based Reinforcement Learning~\cite{ormoneit2002kernel} and supply a smoothing kernel in the joint space of states and actions. The kernel we use takes the form $\exp(-\|\cdot\| / b)$, where $\|\cdot\|$ is the $\ell_2$-distance in $S\times A$,\footnote{The difference of two directions is defined as the angle between them (in degrees) divided by $45^{\circ}$. For computational efficiency, the kernel function is cropped to 0 whenever two state-action pairs deviate more than $1$ in any of the dimensions.} 
and $b$ is the kernel bandwidth, set to $0.25$.

\textbf{Data sizes \& other details}~~
The data sizes are $|D\train|=1000$ and $|D\eval|=2500$, and we split $D\eval$ such that $D\test \in \{5, 50, 500, 1000, 1500, 2000, 2450, 2495\}$. 
DR-bsl uses the step-dependent constant function
$$
\Q(s_t, a_t) = \frac{R_{\min}}{2} \frac{1 - \gamma^{H-t+1}}{1-\gamma},
$$
for the reason that in Sail $R_{\min}$ is rarely reached hence too pessimistic as a rough estimate of the magnitude of reward obtained per step.

\subsection{KDD Cup 1998 Donation Dataset}

Here are further details for experiments with the KDD donation dataset:
\begin{enumerate}
\item The size of dataset generated from the simulator for off-policy evaluation is equal to that of the true dataset (the one we use to fit the simulator at the very beginning;  there are $3754$ trajectories in that dataset).
\item The policy $\pi\train$ is generated by training a recurrent neural network on the original data to fit a Q-value function~\cite{Li15Recurrent}.
\item  Since there are many possible next-states for each state-action pair, for computational efficiency we use a sparse-sample approach when estimating $\Q$ using the fitted model $\M$: for each $(s, a)$, we randomly sample several next-states from $\P(\cdot | s, a)$, and cache them as a particle representation for the next-state distribution. The number of particles is set to $5$ which is enough to ensure high accuracy.
\end{enumerate} 

\end{document}